%% file: main.tex
\documentclass{article}

\usepackage[utf8]{inputenc} 
\usepackage[T1]{fontenc}    
\usepackage{hyperref}       
\usepackage{url}           
\usepackage{booktabs}       
\usepackage{amsfonts}       
\usepackage{xspace}         

\usepackage{nicefrac}       
\usepackage{microtype}      
\usepackage{xcolor}         
\usepackage{graphicx}
\usepackage{amsmath,amssymb,enumitem, amsthm}
\usepackage{cleveref}
\usepackage{multirow}
\usepackage{wrapfig}
\usepackage{float}
\newcommand{\modelname}{CSAE\xspace}
\newcommand{\modelnames}{CSAEs\xspace}
\PassOptionsToPackage{numbers,square,comma}{natbib}
\usepackage[preprint]{neurips_2026}

\input{defs.tex}

\title{Formatting Instructions For NeurIPS 2026}

\title{Cascaded Sparse Autoencoders Learn\\Multi-Level Visual Concepts in Multimodal LLMs}



\author{%
  Yusong Zhao$^{\dagger 1}$
  \And
  Hengyi Wang$^{1}$
  \And
  Tanuja Ganu$^{2}$
  \And
  Akshay Nambi$^{2}$
  \And
  Hao Wang$^{1}$
}

\begin{document}

\maketitle

\begingroup
\renewcommand{\thefootnote}{}
\footnotetext{%
$^{1}$Rutgers University.
\quad
$^{2}$Microsoft.
\quad
$^{\dagger}$Correspondence to:
Yusong Zhao
\texttt{<yusong.zhao@rutgers.edu>}
}
\endgroup

\begin{abstract}
Multimodal Large Language Models (MLLMs) have demonstrated strong performance on vision-language tasks, yet their internal visual representations remain difficult to interpret.
Sparse Autoencoders (SAEs) provide a scalable way to decompose dense model activations into sparse, interpretable features.
However, existing SAE architectures primarily recover flat feature dictionaries and are less suited for explicit multi-level concept organization.
In this paper, we introduce cascaded sparse autoencoders (\modelnames{}) for learning hierarchical visual concepts in MLLMs.
{Rather than nesting or stacking SAE sparse activation codes, \modelnames{} train a second-level SAE directly on the decoder weights of the first-level SAE, treating learned low-level feature directions as inputs for higher-level abstraction.}
{This design enables \modelnames{} to learn ``concepts of concepts'' while avoiding drawbacks from the shared-prefix coupling of nesting, Matryoshka-style hierarchies and the bottlenecks of naively stacked SAEs.}
Experiments across Qwen3-VL, Gemma-3, and LLaVA on multiple visual datasets show that \modelnames{} improve interpretability in terms of hierarchical concept coherence over state-of-the-art SAE baselines.
Results on concept steering further demonstrate that the learned concept groups support effective group-level interventions in MLLM outputs.
\end{abstract}

\section{Introduction}
Multimodal Large Language Models (MLLMs) such as LLaVA \cite{liu2023visualinstructiontuning}, Qwen-VL \cite{bai2023qwenvlversatilevisionlanguagemodel}, and Gemma \cite{gemmateam2024gemmaopenmodelsbased} have demonstrated exceptional capabilities in processing visual inputs beyond the purely textual scope of traditional LLMs. They bridge the gap between advanced visual perception and the reasoning abilities of Large Language Models, achieving state-of-the-art performance in complex tasks ranging from visual question answering and embodied agency to general-purpose personal assistance \cite{driess2023palmeembodiedmultimodallanguage, moor2023foundation}. Despite their rapid deployment in commercial and everyday applications, most of these models remain ``black boxes'', often exhibiting unpredictable and hazardous behaviors. These include severe hallucinations, such as confidently describing non-existent objects \cite{bai2025hallucinationmultimodallargelanguage, li2023evaluatingobjecthallucinationlarge}, and vulnerability to visual jailbreak attacks that bypass safety alignment \cite{qi2023visualadversarialexamplesjailbreak, ma2024visualroleplayuniversaljailbreakattack}. {These risks highlight the need to interpret the concepts encoded inside MLLMs.}

However, the mechanistic interpretation of MLLMs presents challenges that exceed those of traditional deep learning models. A primary obstacle is the intrinsic polysemy of deep neural networks, where a single neuron may activate for disparate concepts \cite{elhage2022superposition, gandelsman2024interpretingclipsimagerepresentation}. Moreover, the high-dimensional nature of internal MLLM layers further exacerbates this issue. Given the open-ended behavior of MLLMs, disentangling these features through manual annotation is infeasible.

To address these challenges, Sparse Autoencoders (SAEs) \cite{bricken2023monosemanticity} have emerged as a scalable, unsupervised methodology. By projecting dense model activations into an overcomplete, sparse latent space, SAEs explicitly attempt to decompose superimposed features into distinct, monosemantic directions. SAEs have been successfully used to discover diverse features in both large language models~\cite{cunningham2023sparseautoencodershighlyinterpretable} and vision-language models~\cite{lim2025sparseautoencodersrevealselective, zaigrajew2025interpretingcliphierarchicalsparse, lou2025saevinterpretingmultimodalmodels, shen2025vlsaeinterpretingenhancingvisionlanguage, papadimitriou2025interpretinglinearstructurevisionlanguage}. Recent multimodal SAE studies further show that sparse features can help analyze vision-language alignment, shared embedding spaces, and data quality in MLLMs. {Despite these advances, learning explicit multi-level visual concept hierarchies from MLLMs remains challenging.}

Recent work attempts to address this issue using Matryoshka SAEs inspired by Matryoshka representation learning \cite{kusupati2024matryoshkarepresentationlearning, bussmann2025learningmultilevelfeaturesmatryoshka, zaigrajew2025interpretingcliphierarchicalsparse}, discovering hierarchical features in models such as CLIP \cite{radford2021learningtransferablevisualmodels} and Gemma-2-2b. 
{However, these methods implement hierarchy through nested prefixes of a single dictionary, so early directions are reused by downstream prefixes, creating a shared-prefix coupling that can limit explicit multi-level concept organization; error in high-level concepts can propagate to low-level concepts (theoretical analysis in~\secref{sec:msae_theory}).} {Another natural approach is to stack multiple SAE layers.}
{However, stacked SAEs compress sparse activation codes through a smaller high-level bottleneck, which can lose information needed for reconstruction and and thus hinder the learning of lower-level concepts (theoretical analysis in~\secref{sec:ssae_theory}).}


\begin{figure}[t]
  \centering
  \vspace{-0.5cm}
  \includegraphics[width=\linewidth]
  {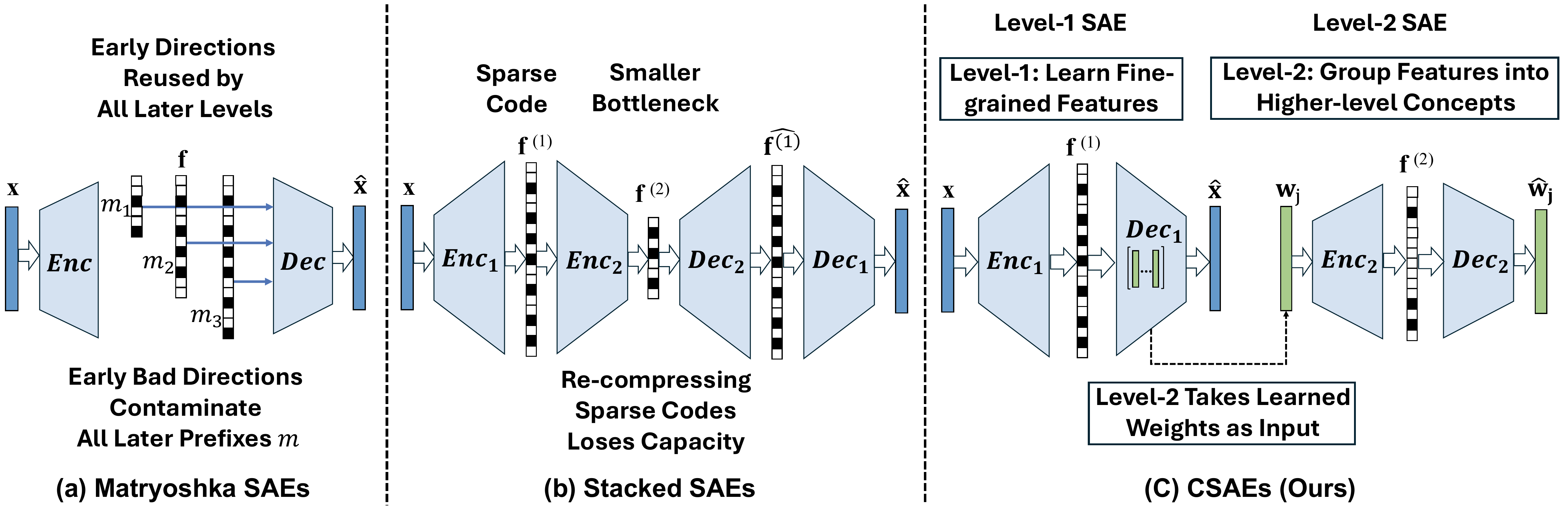}
   \vspace{-0.6cm}
   \caption{Overview of hierarchical design choices for SAEs.
\textbf{(a) Matryoshka SAEs} construct the concept hierarchy through a single nested prefix chain, with early latent directions globally reused across later levels. \textbf{(b) Stacked SAEs} learn the concept hierarchy by re-compressing sparse sample codes through a smaller bottleneck, which introduces an additional capacity constraint. \textbf{(c) Our \modelnames} take a different route: the Level-1 SAE learns low-level features, and the Level-2 SAE is trained directly on the learned Level-1 decoder weights, enabling higher-level abstraction over concepts rather than overly shared prefixes or re-compressed sample codes.}
   \label{fig:overview}
   \vskip -0.5cm
\end{figure}

\figref{fig:overview} shows an overview comparing different designs. 
To address this challenge, this paper makes a systematic attempt to \emph{explore an alternative beyond the nested architecture in Matryoshka SAEs and the stacked SAE architecture}. We propose a new SAE variant, dubbed cascaded sparse autoencoders (\modelnames). Rather than feeding the activations of one SAE layer into another, we jointly train two SAEs: a high-level SAE and a low-level SAE. The low-level (Level-1) SAE learns the low-level concepts directly from MLLM activations.
{The high-level (Level-2) SAE is trained by treating each column of the low-level SAE's decoding weight matrix as a data point.}
Since each column of the decoding weights represents one low-level concept, the high-level SAE effectively learns higher-order abstractions, i.e., ``concepts of concepts'', from the low-level SAE. Our contributions are:

\begin{itemize}
\item We propose a new SAE variant, dubbed cascaded sparse autoencoders (\modelnames), that enable the learning of hierarchical concepts from MLLMs.
\item {Theoretical analysis shows that naively stacking SAE layers can suffer from a sparse bottleneck failure mode, while Matryoshka-style nested prefixes can amplify local semantic errors through shared-prefix reuse.}
\item {Empirical results across Qwen3-VL, Gemma-3, and LLaVA on multiple visual datasets demonstrate improved hierarchical concept interpretability. Results on activation steering show that the learned concept groups enable effective interventions in MLLM outputs.}
\end{itemize}


\section{Related Work}

\textbf{Multimodal LLMs.} 
{Compared to traditional text-only LLMs, Multimodal LLMs (MLLMs) perceive and reason across modalities such as images, video, audio, and text.}
{Seminal works such as CLIP~\cite{radford2021learningtransferablevisualmodels} aligned visual and textual representations for zero-shot classification, Flamingo~\cite{alayrac2022flamingovisuallanguagemodel} introduced interleaved visual-textual modeling, and BLIP-2~\cite{li2023blip2bootstrappinglanguageimagepretraining} adapted frozen LLMs to visual tasks using lightweight adapters.}
{Modern multimodal systems include closed-source models such as the GPT series~\cite{openai2024gpt4technicalreport}, Gemini series~\cite{geminiteam2025geminifamilyhighlycapable, comanici2025gemini25pushingfrontier, geminiteam2024gemini15unlockingmultimodal}, and Claude~\cite{anthropic2024claude3}, as well as open-source models such as LLaVA~\cite{liu2023visualinstructiontuning}, Qwen-VL~\cite{wang2024qwen2vlenhancingvisionlanguagemodels, bai2025qwen25vltechnicalreport, bai2025qwen3vltechnicalreport}, DeepSeek-VL~\cite{lu2024deepseekvlrealworldvisionlanguageunderstanding}, Pixtral~\cite{agrawal2024pixtral12b}, Gemma~\cite{gemmateam2024gemmaopenmodelsbased, gemmateam2025gemma3technicalreport}, and Llama 3.2 Vision~\cite{meta2024llama32}.}
{While these models show strong performance, it remains unclear how they represent and organize visual concepts internally; this is the focus of our paper.}

\textbf{Sparse Autoencoders.}
Recently, Sparse Autoencoders (SAEs) have emerged as a powerful tool to enhance the mechanistic interpretability of large models.
Generally, SAEs~\cite{bricken2023monosemanticity, cunningham2023sparseautoencodershighlyinterpretable} employ an overcomplete set of basis vectors and a sparsity penalty (typically $\ell_1$) to decompose the dense, polysemantic activations of a neural network into linear combinations of interpretable, monosemantic feature directions.
{Existing variants include foundational $\ell_1$-based SAEs~\cite{bricken2023monosemanticity, karvonen2024measuringprogressdictionarylearning}, Top-K sparsity~\cite{gao2024scalingevaluatingsparseautoencoders}, Gated SAEs~\cite{rajamanoharan2024improvingdictionarylearninggated}, and JumpReLU SAEs~\cite{rajamanoharan2024jumpingaheadimprovingreconstruction}.}
{These architectures improve sparse feature learning through mechanisms such as hard sparsity, learnable gating, and discontinuous activations, but they are primarily designed for flat feature dictionaries.}
Although SAEs facilitate the interpretability of large models, adapting the aforementioned structures to discover \emph{hierarchical} (i.e. multi-level) concepts within multimodal LLMs presents unique challenges. 

Matryoshka SAEs~\cite{bussmann2025learningmultilevelfeaturesmatryoshka, zaigrajew2025interpretingcliphierarchicalsparse} learn hierarchical concepts by
training multiple nested dictionaries of increasing size, with smaller dictionaries encouraged to reconstruct inputs independently, so earlier prefixes learn more general concepts and later prefixes refine them with more specific features.
However, because these hierarchies are implemented through shared nested prefixes, early directions are reused by downstream prefixes, creating a shared-prefix coupling. Consequently, error in high-level concepts can propagate to low-level concepts (see theoretical analysis in~\secref{sec:msae_theory}).
{In contrast, our proposed \modelname{} learns high-level concepts by training a second SAE directly on the decoder weight columns of the low-level SAE, enabling explicit abstraction over learned visual concepts.}

\textbf{Interpreting Multimodal Models with SAEs.}
SAEs have been extensively used to interpret large models.
Early work studied text-only LLMs, showing that SAEs can decompose polysemantic activations into more interpretable features~\cite{bricken2023monosemanticity, cunningham2023sparseautoencodershighlyinterpretable}.
Later work scaled SAEs to larger language models such as GPT-2/GPT-4~\cite{gao2024scalingevaluatingsparseautoencoders}, Claude 3 Sonnet~\cite{templeton2024scaling}, and Gemma 2~\cite{lieberum2024gemmascopeopensparse}.
More recently, SAEs have been applied to vision and multimodal models.
Prior work trains SAEs on CLIP representations to isolate visual concepts~\cite{zaigrajew2025interpretingcliphierarchicalsparse, pach2025sparseautoencoderslearnmonosemantic}.
Other studies analyze multimodal alignment and shared embedding spaces: SAE-V interprets MLLM cross-modal features and uses them for data filtering and alignment improvement~\cite{lou2025saevinterpretingmultimodalmodels}; VL-SAE maps visual and textual representations into a unified concept set for interpreting and enhancing vision-language alignment~\cite{shen2025vlsaeinterpretingenhancingvisionlanguage}; and SAEs trained on VLM embedding spaces reveal sparse linear structures shaped by modality and cross-modal semantic bridges~\cite{papadimitriou2025interpretinglinearstructurevisionlanguage}.
These works demonstrate the utility of SAEs for multimodal interpretability and alignment.
In contrast, our work studies explicit \emph{multi-level visual concept hierarchies} in modern MLLMs, where high-level concepts organize coherent groups of low-level concepts.

\section{Methodology}

\subsection{Preliminary: Sparse Autoencoders (SAEs)}\label{sec:prelim}

Sparse Autoencoders (SAEs) implement a form of sparse dictionary learning designed to decompose the dense, polysemantic activations of large models into interpretable components. Given an input embedding vector $\mathbf{x} \in \mathbb{R}^{d}$ from a specific layer of a (multimodal) LLM, the goal is to learn an overcomplete dictionary of features parameterized by an encoder $\mathbf{W}_{\text{enc}} \in \mathbb{R}^{n \times d}$ and a decoder $\mathbf{W}_{\text{dec}} \in \mathbb{R}^{d \times n}$ (where $n \gg d$). Crucially, the architecture typically employs a shared geometric bias $\mathbf{b} \in \mathbb{R}^{d}$, which is subtracted from the input to center the signal during encoding and added back to the output during decoding.

The encoding process $g: \mathbb{R}^d \to \mathbb{R}^n$ projects the centered input into a sparse latent space, while the decoding process $h: \mathbb{R}^n \to \mathbb{R}^d$ maps the activations back to the input data manifold:
\begin{align}
    \mathbf{f} &= g(\mathbf{x}) = \sigma(\mathbf{W}_{\text{enc}}(\mathbf{x} - \mathbf{b})), \\
    \hat{\mathbf{x}} &= h(\mathbf{f}) = \mathbf{W}_{\text{dec}} \mathbf{f} + \mathbf{b},
\end{align}
where $\sigma(\cdot)$ is a non-linear activation function (e.g., ReLU). The model parameters are learned by minimizing a joint objective $\mathcal{L}$ composed of a reconstruction term $\mathcal{R}$ and a sparsity penalty $\mathcal{S}$ :
\begin{equation}
    \mathcal{L}(\mathbf{x}) = \underbrace{||\mathbf{x} - \hat{\mathbf{x}}||_2^2}_{\mbox{reconstruction }\mathcal{R}(\mathbf{x})} +  \underbrace{\lambda \mathcal{S}\big(g(\x)\big)}_{\mbox{sparsity penalty}},
\end{equation}
{where $\mathcal{S}(\cdot)$ is usually an $\ell_1$ loss}, i.e., $\|\cdot\|_1$, to encourage sparsity. 
While the standard instantiation relies on ReLU activation and an $\ell_1$ sparsity penalty, this framework readily accommodates alternative architectures. For instance, TopK SAEs \cite{gao2024scalingevaluatingsparseautoencoders} substitute the soft $\ell_1$ penalty with a hard sparsity constraint by modifying $\sigma(\cdot)$ to select only the top-$k$ activations. 

{Similarly, Matryoshka SAEs~\cite{bussmann2025learningmultilevelfeaturesmatryoshka} organize the latent space into nested index subsets
$\mathcal{I}_i=\{1,\ldots,m_i\}$ with
$m_1<\cdots<m_L$, enforcing a joint objective
$\sum_{i=1}^L \|\mathbf{x} - (\mathbf{W}_{\text{dec}}^{(:, \mathcal{I}_i)}\mathbf{f}_{\mathcal{I}_i} + \mathbf{b})\|_2^2$
so that smaller prefixes are encouraged to reconstruct the input independently.}

\subsection{From SAE to \modelname}\label{sec:saepp}
Existing SAEs usually treat dictionary atoms as independent vectors, failing to capture the hierarchical correlations inherent in multimodal data (e.g., lower-level concepts ``barrel'' and ``drum'' together form the higher-level concept of ``cylinder''). 

To address this, we propose a new SAE architecture, \modelname, which is a two-level sparse dictionary learning system trained end-to-end. This framework implements hierarchical feature abstraction, systematically organizing atomic concepts (features) from the low-level SAE into high-level semantic structures. Below we introduce \modelname in detail. 
Note that while we describe \modelname as a \emph{two-level} model, our formulation \emph{can be naturally generalized to more than two levels, as shown in~\appref{appsec:generalized_saepp}}. 



Let $\mathbf{x} \in \mathbb{R}^d$ denote an activation vector from a fixed layer of a multimodal LLM. Our \modelname consists of two hierarchical levels trained jointly to capture both fine-grained atomic features and their high-level semantic structures. {\figref{fig:overview}(c) shows an overview of \modelname.}

{\textbf{Level-1 (Low-Level) SAE.} We employ a standard SAE to decompose the input activations $\x$. {This Level-1 SAE has an encoder $g_1$ and a decoder $h_1$, which produce a sparse code $\mathbf{f}^{(1)} \in \mathbb{R}^{n_1}$ and a reconstruction $\hat{\mathbf{x}}$:}}
\begin{align}
    \mathbf{f}^{(1)} &= g_1(\mathbf{x}) = \sigma(\mathbf{W}^{(1)}_{\text{enc}}(\mathbf{x} - \mathbf{b}^{(1)})), \\
    \widehat{\mathbf{x}} &= h_1(\f^{(1)}) = \mathbf{W}_{\text{dec}}^{(1)} \mathbf{f^{(1)}} + \mathbf{b}^{(1)}.
\end{align}
Importantly, here the decoder weight matrix $\mathbf{W}_{\text{dec}}^{(1)}$ contains the atomic concept (feature) directions learned by the model. We denote the columns of this decoder by:
\begin{equation}
    \mathbf{W}_{\text{dec}}^{(1)} = [\mathbf{w}_1, \dots, \mathbf{w}_{n_1}], \quad \mathbf{w}_j \in \mathbb{R}^d, \label{eq:W_dec}
\end{equation}
where each $\mathbf{w}_j$ represents a specific concept direction in the original activation space. 
For each input LLM activation embedding $\x$, we can define the Level-1 loss as follow:
\begin{align*}
    \mathcal{L}_1(\mathbf{x}, \tha_1) = ||\mathbf{x} - \hat{\mathbf{x}}||_2^2 + \lambda_1 \SM\big(g_1(\x)\big),
\end{align*}
where $\mathcal{S}(\cdot)$ is usually an $\ell_1$ loss, i.e., $\|\cdot\|_1$, to encourage sparsity. $\tha_1=\{\W_{enc}^{(1)},\W_{dec}^{(1)},\b^{(1)}\}$ denotes parameters in the Level-1 SAE.

{\textbf{Level-2 (High-Level) SAE.} The core innovation of our \modelname{} lies in the structural ``clustering'' performed by the Level-2 (high-level) SAE. Rather than processing features $\x$ or $\f^{(1)}$, our Level-2 SAE operates directly on the atomic concept directions, i.e., the weight columns 
$\{\mathbf{w}_j\}_{j=1}^{n_1}$ of the Level-1 decoder $\mathbf{W}_{\text{dec}}^{(1)}$ (\eqnref{eq:W_dec}). {It uses an encoder-decoder pair to map each Level-1 atom $\w_j$ from the original feature space $\mathbb{R}^d$ to a higher-level latent space $\mathbb{R}^{n_2}$ and reconstruct it back to $\mathbb{R}^d$:}}
\begin{align}
    \mathbf{f}^{(2)}_j &= g_2(\w_j) = \sigma(\mathbf{W}^{(2)}_{\text{enc}}(\w_j - \mathbf{b}^{(2)})), \\
    \widehat{\w}_j &= h_2(\mathbf{f}^{(2)}_j) = \mathbf{W}_{\text{dec}}^{(2)} \mathbf{f}^{(2)}_j + \mathbf{b}^{(2)}.
\end{align}
This Level-2 (high-level) SAE is learned with the objective:
\begin{align}
    \mathcal{L}_2(\mathbf{W}_{\text{dec}}^{(1)}, \tha_2) 
    =
    \frac{1}{n_1} \sum_{j=1}^{n_1} \Big(
    \|\mathbf{w}_j - h_2(g_2(\mathbf{w}_j))\|_2^2 
    + \lambda_2 \mathcal{S}(g_2(\mathbf{w}_j))\Big),
\label{eq:L2}
\end{align}
where $\mathcal{S}(\cdot)$ is usually an $\ell_1$ loss, i.e., $\|\cdot\|_1$, to encourage sparsity. 
$\tha_2=\{\W_{enc}^{(2)},\W_{dec}^{(2)},\b^{(2)}\}$ denotes parameters in the Level-2 SAE.

{\textbf{Joint Optimization.} {The final objective combines the losses of the Level-1 SAE ($\mathcal{L}_1$) and the Level-2 SAE ($\mathcal{L}_2$):}}
\begin{equation}
    \mathcal{L}_{\text{final}}(\tha_1, \tha_2) = \mathbb{E}_{\mathbf{x}}[\mathcal{L}_1(\mathbf{x}, \tha_1)] + \alpha \mathcal{L}_2 (\mathbf{W}_{\text{dec}}^{(1)}, \tha_2 ).
\end{equation}
{Here, $\alpha$ is a hyperparameter that balances low-level concept learning and the high-level concept abstraction.}
Note that $\mathbf{W}_{\text{dec}}^{(1)}$ is part of the Level-1 SAE's parameters $\tha_1$. 

\textbf{Dynamic Masking of Dead Latents.}
For computational efficiency, we apply the Level-2 SAE only to active Level-1 decoder atoms (weight columns) $\w_j$ in the current mini-batch.
At training step \(t\), given mini-batch \(\mathcal{B}\), we define
\(\mathcal{A}_t=\{\mathbf{w}_j~|~\sum_{\mathbf{x}\in\mathcal{B}}\mathbf{1}(|[g_1(\mathbf{x})]_j|>\epsilon_0)\ge1\}\),
where \(\mathbf{w}_j\) is the \(j\)-th Level-1 decoder atom, \([g_1(\mathbf{x})]_j\) is its Level-1 activation on input \(\mathbf{x}\), and \(\epsilon_0>0\) is a small threshold.
We compute the Level-2 loss only over \(\mathcal{A}_t\).
this masking affects only which Level-1 atoms are included in the Level-2 objective, 
not the architecture or final objective (more details in~\appref{appsec:dynamic_masking}).

\section{Theoretical Analysis}
\label{sec:theory}

In this section, we provide theoretical analysis on why two natural hierarchical SAE designs, Stacked SAEs and Matryoshka SAEs, may fall short in learning multi-level concepts. 
Specifically, we show that these two designs may fail for different reasons: 
stacked SAEs face a sparse-code bottleneck mismatch, while Matryoshka SAEs can reuse early semantic errors across downstream prefixes. 
\emph{The purpose of this section is not to prove that these baselines are always ineffective, but to isolate structural failure modes that our \modelnames{} avoid by construction.}
\textbf{All proofs are in~\appref{appsec:theory}.}

\subsection{Why Stacked SAEs Fail}\label{sec:ssae_theory}

\textbf{Failure of Stacked SAEs.} 
Consider a stacked SAE with the architecture (similar to~\figref{fig:overview}(b))
\begin{align}
d\rightarrow n \rightarrow m\rightarrow n\rightarrow d,
\label{eq:arch}
\end{align}
where the first SAE (the encoder-decoder pair $d\rightarrow n$ and $n\rightarrow d$) maps an activation to a sparse code
\[
\mathbf{z}\in
\mathcal{S}_{n,k_1}(B)
:=
\{\mathbf{z}\in\mathbb{R}^n:\|\mathbf{z}\|_0\le k_1,\ \|\mathbf{z}\|_2\le B\}.
\]
The second SAE uses \(g:\mathbb{R}^n\to\mathbb{R}^m\) and \(h:\mathbb{R}^m\to\mathbb{R}^n\) to reconstruct \(\mathbf{z}\).
If its bottleneck is sparser,
\[
g(\mathbf{z})\in
\mathcal{S}_{m,k_2}(R)
:=
\{\mathbf{u}\in\mathbb{R}^m:\|\mathbf{u}\|_0\le k_2,\ \|\mathbf{u}\|_2\le R\},
\qquad
k_2<k_1,
\]
then the second SAE must preserve a \(k_1\)-sparse code space using a \(k_2\)-sparse bottleneck.
This is exactly the setting one might hope would learn higher-level abstractions: a lower-dimensional and sparser code should represent more compressed concepts.
The theorem below shows that this intuition conflicts with uniform reconstruction.

\begin{theorem}[\textbf{Failure of Stacked SAEs with a Sparser Bottleneck}]
\label{thm:stacked_failure_sparse_main}
Assume uniform reconstruction
\[
\sup_{\mathbf{z}\in\mathcal{S}_{n,k_1}(B)}
\|h(g(\mathbf{z}))-\mathbf{z}\|_2 \le \varepsilon,
\]
where \(h\) is \(L_h\)-Lipschitz, and assume \(g(\mathbf{z})\in\mathcal{S}_{m,k_2}(R)\) with \(k_2<k_1\).
If \(\varepsilon\le B/8\) and the bottleneck width \(m\) is insufficient to compensate for the sparsity drop, then no such SAE encoder-decoder pairs \((g,h)\) can uniformly reconstruct \(\mathcal{S}_{n,k_1}(B)\).
In particular, for \(m<n\) and fixed \(k_2<k_1\), uniform reconstruction is impossible for sufficiently large \(n/k_1\).
\end{theorem}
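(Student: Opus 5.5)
The plan is a packing-versus-covering argument. Uniform reconstruction forces the decoder $h$ to send the bottleneck image $g(\mathcal{S}_{n,k_1}(B))\subseteq\mathcal{S}_{m,k_2}(R)$ onto a set whose $\varepsilon$-neighbourhood contains all of $\mathcal{S}_{n,k_1}(B)$. Since $h$ is $L_h$-Lipschitz, it cannot increase covering numbers at matching scales. The proof therefore reduces to comparing a lower bound on the packing number of $\mathcal{S}_{n,k_1}(B)$ with an upper bound on the covering number of $\mathcal{S}_{m,k_2}(R)$.

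\textbf{Step 1 (packing lower bound).} I will build a large $B/2$-separated set inside $\mathcal{S}_{n,k_1}(B)$. Take vectors $\mathbf{z}_S=(B/\sqrt{k_1})\mathbf{1}_S$ for supports $S$ of size $k_1$, chosen by a Gilbert--Varshamov style argument so that any two supports satisfy $|S\triangle S'|\ge k_1/2$. This gives pairwise distances at least $B/\sqrt{2}\ge B/2$ and a family of size $M\ge (n/(c k_1))^{c' k_1}$ for absolute constants $c,c'$.

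\textbf{Step 2 (covering upper bound).} Cover $\mathcal{S}_{m,k_2}(R)$ at scale $\delta=B/(8L_h)$. Take a union over the $\binom{m}{k_2}$ supports of a $\delta$-net of a $k_2$-dimensional ball of radius $R$. This gives
\[
N\le \binom{m}{k_2}\Bigl(1+\tfrac{2R}{\delta}\Bigr)^{k_2}\le \Bigl(\tfrac{em}{k_2}\Bigr)^{k_2}\Bigl(1+\tfrac{16 L_h R}{B}\Bigr)^{k_2}.
\]

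\textbf{Step 3 (pigeonhole).} Each packing point $\mathbf{z}_S$ has $g(\mathbf{z}_S)$ within $\delta$ of some net point. If $M>N$, two distinct packing points $\mathbf{z},\mathbf{z}'$ have codes sharing a net point, so $\|g(\mathbf{z})-g(\mathbf{z}')\|\le 2\delta$. Then
\[
\|\mathbf{z}-\mathbf{z}'\|\le 2\varepsilon+L_h\cdot 2\delta\le B/4+B/4=B/2.
\]
This contradicts the separation of Step 1, which is strictly larger than $B/2$ since $B/\sqrt{2}>B/2$. So impossibility holds exactly when
\[
(n/(ck_1))^{c'k_1}>\bigl(em/k_2\bigr)^{k_2}\bigl(1+16L_hR/B\bigr)^{k_2}.
\]
This is the precise meaning of ``$m$ insufficient to compensate for the sparsity drop''; I will state it as the hypothesis. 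For the ``in particular'', set $m<n$ and fix $k_2<k_1$. The left side grows like $(n/k_1)^{c'k_1}$, while the right side is at most $(en)^{k_2}C^{k_2}$. Growth in $n/k_1$ then wins once $c'k_1>k_2$.

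\textbf{Main obstacle.} The delicate step is tuning the constants in Step 1 so that the exponent genuinely beats $k_2$ whenever $k_2<k_1$, rather than only when $k_2<c'k_1$. A random-support packing yields a constant $c'<1$. To fix this, I would first try a denser construction: let supports differ in at least one coordinate, and use separation $B\sqrt{2/k_1}$ with the finer scale $\varepsilon\lesssim B/\sqrt{k_1}$. If that fails to give $c'=1$, I would instead state the asymptotic claim for $k_2<c'k_1$. That weaker version still captures the intended sparse-bottleneck mismatch.
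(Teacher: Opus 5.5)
Your Steps 1--3 are sound, and they share the paper's skeleton. The paper takes a $B/2$-separated $\mathcal{P}\subset\mathcal{S}_{n,k_1}(B)$ and uses the same triangle inequality to show $g(\mathcal{P})$ is $\frac{B}{4L_h}$-separated, so $g$ is injective on $\mathcal{P}$. It then bounds $|g(\mathcal{P})|$ by the volumetric bound $\binom{m}{k_2}(1+8RL_h/B)^{k_2}$. Your covering-plus-pigeonhole step is interchangeable with this.

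The real difference is Step 1, and there you are right and the paper is not. The paper asserts a $B/2$-separated set of size $\binom{n}{k_1}(2c)^{k_1}$, obtained by taking the union over all supports of packings of the $k_1$-dimensional balls. That union need not be separated across supports: points on two supports sharing $k_1-1$ indices can be arbitrarily close. In fact no set of that size exists for large $n/k_1$ once $k_1\ge 64$. Keeping the $j=k_1-\lfloor k_1/64\rfloor$ largest entries moves any point of $\mathcal{S}_{n,k_1}(B)$ by at most $B/8$. So a $B/2$-separated set injects into a $B/4$-separated subset of $\mathcal{S}_{n,j}(B)$, which by the volumetric bound has at most $\binom{n}{j}9^j$ points. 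For any fixed $c>0$ this is eventually fewer than $\binom{n}{k_1}(2c)^{k_1}$. So your exponent $c'k_1$ with $c'<1$ is not an artifact of a lossy construction. Your condition $M>N$ is the rigorous form of ``$m$ insufficient to compensate''; note it is a sufficient condition, not an ``exactly when''.

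For the same reason, do not try to reach $c'=1$ under $\varepsilon\le B/8$: the ``in particular'' clause is false for $k_2$ close to $k_1$. Here is a counterexample.
\begin{itemize}
\item Take $k_1=100$, $k_2=99$, $m=n-1$. Take a $1$-Lipschitz curve $\phi:\mathbb{R}\to\mathbb{R}^2$ with $\phi(0)=\mathbf{0}$ whose image is a $B/16$-net of $[-B,B]^2$. A boustrophedon path of length $T=O(B)$ started at the origin and clamped outside $[0,T]$ works.
\item Set $h(\mathbf{u})=(u_1,\dots,u_{n-2},\phi(u_{n-1}))$, which is $1$-Lipschitz.
\item Let $g$ copy $z_1,\dots,z_{n-2}$ and encode $(z_{n-1},z_n)$ by some $t\in[0,T]$ with $\|\phi(t)-(z_{n-1},z_n)\|_2\le B/16$, using $t=0$ if the pair vanishes.
\item This would leave $100$ nonzeros only when at most one of $z_{n-1},z_n$ is nonzero. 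In that case, first zero out the smallest entry of $\mathbf{z}$, which has magnitude at most $B/10$.
\end{itemize}
Then $g(\mathbf{z})\in\mathcal{S}_{n-1,99}(\sqrt{B^2+T^2})$, and the error is at most $\sqrt{B^2/100+B^2/256}<B/8$ for every $n$.

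Your two fallbacks are therefore the correct statements.
\begin{itemize}
\item The finer-scale route, using the full family $\{(B/\sqrt{k_1})\mathbf{1}_S\}$, excludes every $k_2<k_1$ for large $n$. However, it needs the changed hypothesis $\varepsilon<B/\sqrt{2k_1}$.
\item Under $\varepsilon\le B/8$ you can sharpen the restricted version. You only need separation strictly above $2\varepsilon=B/4$; taking $\delta$ small costs only a constant in $N$ for fixed $k_2$. That means requiring $|S\triangle S'|>k_1/16$, and Gilbert--Varshamov then gives exponent $k_1-\lfloor k_1/32\rfloor$. In particular, the full clause does hold when $k_1\le 31$.
\end{itemize}
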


Theorem~\ref{thm:stacked_failure_sparse_main} shows why naive stacking is not an effective architecture:
it compresses the combinatorial space of sparse sample codes through a smaller sparse bottleneck.
The issue is not caused by a particular optimizer or activation function; it follows from the geometry of sparse code spaces.

\textbf{\modelnames Avoid Similar Failure.} 
In contrast, \modelnames{} avoid this issue because its Level-2 SAE is trained on Level-1 decoder atoms $\w_j$,
\[
\mathcal{L}_2
=
\frac{1}{n_1}\sum_{j=1}^{n_1}
\left(
\|\mathbf{w}_j-h_2(g_2(\mathbf{w}_j))\|_2^2
+
\lambda_2\mathcal{S}(g_2(\mathbf{w}_j))
\right),
\]
where \(\mathbf{w}_j\in\mathbb{R}^d\) is a Level-1 decoder weight column.
Thus, the second level abstracts over concept directions rather than sparse sample codes (more details and theoretical analysis in~\appref{appsec:theory}). 

\subsection{Shared-Prefix Error Amplification in Matryoshka SAEs}
\label{sec:msae_theory}

\textbf{Shared-Prefix Reuse of Matryoshka SAEs.}  
Matryoshka SAEs build hierarchy by ordering decoder columns \(\mathbf{d}_1,\mathbf{d}_2,\dots\), each a feature \textbf{direction} in activation space (like $\w_1,\w_2,\dots$ in CSAE). The \(t\)-th prefix uses the first \(m_t\) directions, and all later prefixes reuse them. Hence, early directions are shared across levels. If such a direction captures reconstruction-relevant but \emph{semantically irrelevant} variation, its error propagates across multiple levels. We refer to such directions as \textbf{nuisance directions}.

\textbf{Formalizing Prefix Reuse.}  
Matryoshka SAEs optimize nested index sets
$
\mathcal{I}_t=\{1,\ldots,m_t\},~ m_1<\cdots<m_L,
$ 
with the objective function
$
\sum_{t=1}^{L}
\left\|
\mathbf{x}-
\left(
\mathbf{W}_{\mathrm{dec}}^{(:,\mathcal{I}_t)}
\mathbf{f}_{\mathcal{I}_t}
+\mathbf{b}
\right)
\right\|_2^2
$, where $\mathbf{W}_{\mathrm{dec}}^{(:,\mathcal{I}_t)}$ and $\mathbf{f}_{\mathcal{I}_t}$ are subsets of $\mathbf{W}_{\mathrm{dec}}$ and $\mathbf{f}$ indexed by $\mathcal{I}_t$.
Because prefix \(t\) can only use the first \(m_t\) columns, any column at position \(p\) is reused in all prefixes with \(m_t \ge p\). We define the reuse count
$
C_{\mathrm{MSAE}}(p):=|\{t:m_t\ge p\}|
$, 
which quantifies how often a direction is reused; earlier columns have larger reuse.

\textbf{From Prefix Reuse to Semantic Mismatch.}  
We analyze semantics via an orthonormal linear surrogate. Let
$
D=[\mathbf{d}_1,\ldots,\mathbf{d}_{m_L}],~
U_t(D)=\mathrm{span}\{\mathbf{d}_1,\ldots,\mathbf{d}_{m_t}\}.
$
With projector \(P_U\), prefix \(t\) reconstructs by projection, giving the idealized reconstruction loss
$
\mathbb{E}_{\mathbf{x}}\|\mathbf{x}-P_{U_t(D)}\mathbf{x}\|_2^2
$.
This linear assumption is used only for the shared-prefix analysis; details are in~\appref{app:shared_prefix_proofs}. 

To measure semantic correctness, we compare \(U_t(D)\) with an ideal semantic subspace \(\mathcal{M}_t\) (see \appref{app:shared_prefix_proofs} for the derivation of the  projector-distance form):
\[
\mathcal{E}_t^{\mathrm{MSAE}}(D)=\|P_{U_t(D)}-P_{\mathcal{M}_t}\|_F^2,\quad
\mathcal{E}_{\mathrm{MSAE}}^{\mathrm{tot}}(D)=\sum\nolimits_{t=1}^L \mathcal{E}_t^{\mathrm{MSAE}}(D).
\]
Thus, mismatch reflects deviation from intended semantic structure.

\textbf{Normalized Semantic Mismatch.}  
Because early directions appear in many prefixes, their errors are counted repeatedly in \(\mathcal{E}_{\mathrm{MSAE}}^{\mathrm{tot}}\). To isolate local effects, we define a normalized error counting each nuisance direction once.  

Let nuisance directions be \(\mathbf{d}_{p_i}=\mathbf{n}_i\), \(i=1,\dots,q\), each orthogonal to all \(\mathcal{M}_t\). For prefix \(t\) with \(m_t\ge p_i\), define a repaired subspace \(\widetilde{U}_t^{(-i)}(D)\) replacing \(\mathbf{n}_i\). The local error reduction is
\[
e_{i,t}^{\mathrm{MSAE}}
=
\mathcal{E}_t^{\mathrm{MSAE}}(D)
-
\|P_{\widetilde{U}_t^{(-i)}(D)}-P_{\mathcal{M}_t}\|_F^2.
\]
Averaging over reused prefixes yields
\[
\overline{\mathcal{E}}_{\mathrm{MSAE}}
=
\sum\nolimits_{i=1}^q
\frac{1}{C_{\mathrm{MSAE}}(p_i)}
\sum\nolimits_{t:m_t\ge p_i} e_{i,t}^{\mathrm{MSAE}}.
\]
This removes duplication effects and enables \emph{fair comparison between Matryoshka SAEs and \modelnames{}}.

\textbf{Semantic Mismatch for \modelnames{}.}  
In contrast, \modelnames{} do not reuse directions across levels. Each Level-1 atom \(\mathbf{w}_j\) has a single parent assignment. Let \(c(j)\) be its ideal parent, with subspace \(\mathcal{P}_{c(j)}\), and \(\widehat{\mathcal{P}}_j\) the assigned one. The local error is
\[
e_j^{\mathrm{CSAE}}
=
\|P_{\widehat{\mathcal{P}}_j}-P_{\mathcal{P}_{c(j)}}\|_F^2,
\]
and for a set \(\mathcal{B}\),
\[
\overline{\mathcal{E}}_{\mathrm{CSAE}}(\mathcal{B})
=
\sum\nolimits_{j\in\mathcal{B}} e_j^{\mathrm{CSAE}}.
\]
Since each error is counted once in CSAE, no reuse normalization is needed.

\begin{theorem}[\textbf{Prefix Reuse Amplifies Matryoshka Semantic Error}]
\label{thm:shared_prefix_main}
Assume \(q\) mutually orthonormal nuisance directions
\(\mathbf{n}_1,\dots,\mathbf{n}_q\) are orthogonal to every target semantic subspace \(\mathcal{M}_t\), and occupy positions \(p_1,\dots,p_q\) in \(D\).
Then
$
\mathcal{E}^{\mathrm{tot}}_{\mathrm{MSAE}}(D)
\ge
2\sum_{i=1}^{q}C_{\mathrm{MSAE}}(p_i)
$. 
Moreover, for any \(\mathcal{B}\) with \(|\mathcal{B}|\le q\), we have
\begin{align}
\overline{\mathcal{E}}_{\mathrm{CSAE}}(\mathcal{B})
\le
2q
\le
\overline{\mathcal{E}}_{\mathrm{MSAE}}.
\end{align}
\end{theorem}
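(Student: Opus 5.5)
The plan is to prove the Matryoshka lower bound with a projector-distance computation, then get the CSAE bound from a per-atom cap and the normalized MSAE bound from a per-prefix repair computation.

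\textbf{Step 1: each nuisance direction costs 2 in each prefix that contains it.} Fix a prefix $t$ with $m_t\ge p_i$. Since $\mathbf{n}_i\in U_t(D)$ and $\mathbf{n}_i\perp\mathcal{M}_t$, we have $P_{U_t}\mathbf{n}_i=\mathbf{n}_i$ and $P_{\mathcal{M}_t}\mathbf{n}_i=0$. Let $N_t$ be the span of the nuisance directions lying in $U_t$; these are orthonormal by hypothesis. Split $U_t=N_t\oplus V_t$ with $V_t=U_t\ominus N_t$. For orthogonal projectors,
\[
\|P_A-P_B\|_F^2=\dim A+\dim B-2\,\mathrm{tr}(P_AP_B).
\]
Because $N_t\perp\mathcal{M}_t$, $\mathrm{tr}(P_{U_t}P_{\mathcal{M}_t})=\mathrm{tr}(P_{V_t}P_{\mathcal{M}_t})$. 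Therefore
\[
\mathcal{E}_t=\dim N_t+\|P_{V_t}-P_{\mathcal{M}_t}\|_F^2\ge \dim N_t.
\]
This only gives 1 per nuisance direction. To reach 2, I would use the convention implicit in the statement, as in the appendix surrogate: the prefix dimension is matched to the target, $\dim U_t=\dim\mathcal{M}_t$. Then each nuisance direction occupying a slot forces a missing semantic direction. Indeed, $\dim V_t=\dim\mathcal{M}_t-\dim N_t$, so $\mathrm{tr}(P_{V_t}P_{\mathcal{M}_t})\le \dim\mathcal{M}_t-\dim N_t$. Plugging this into the identity gives
\[
\mathcal{E}_t\ge 2\dim N_t=2\,|\{i:p_i\le m_t\}|.
\]
Summing over $t$ and swapping the order of summation turns the right-hand side into $2\sum_i|\{t:m_t\ge p_i\}|=2\sum_i C_{\mathrm{MSAE}}(p_i)$, which is the first claim.

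\textbf{Step 2: the normalized Matryoshka error.} Take the repaired subspace $\widetilde U_t^{(-i)}$ to be $U_t$ with $\mathbf{n}_i$ replaced by a unit vector $\mathbf{m}\in\mathcal{M}_t\cap U_t^{\perp}$. Such a vector exists whenever $\dim N_t\ge1$ under dimension matching. The same trace identity then gives $e^{\mathrm{MSAE}}_{i,t}=2$ exactly. The two changes are the trace term rising by 1, which lowers the error by 2, and the dimensions staying unchanged. Averaging $C_{\mathrm{MSAE}}(p_i)$ equal terms of value 2 gives $2$, and summing over $i$ gives $\overline{\mathcal{E}}_{\mathrm{MSAE}}=2q$. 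In particular $\overline{\mathcal{E}}_{\mathrm{MSAE}}\ge 2q$. If the repair is defined less favourably, I would argue for the choice maximizing the reduction, so that the bound still holds as $\ge 2q$.

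\textbf{Step 3: the CSAE side.} Each term $e^{\mathrm{CSAE}}_j=\|P_{\widehat{\mathcal{P}}_j}-P_{\mathcal{P}_{c(j)}}\|_F^2$ needs a bound. Under the assumption that parent subspaces are one-dimensional (single parent directions), which I would state explicitly, the difference of two rank-one projectors has Frobenius norm squared $2(1-\cos^2\theta)\le2$. Summing over $|\mathcal{B}|\le q$ atoms gives $\overline{\mathcal{E}}_{\mathrm{CSAE}}(\mathcal{B})\le 2q$, which completes the chain.

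\textbf{Main obstacle.} The delicate point is the factor 2 in Step 1 and the existence of the repair vector in Step 2. Both depend on the dimension-matching convention $\dim U_t=\dim\mathcal{M}_t$, or equal-rank parents for CSAE. I would first check whether the appendix surrogate fixes these ranks. If it does not, I would add that assumption explicitly, since without it only the weaker bound $\mathcal{E}^{\mathrm{tot}}_{\mathrm{MSAE}}\ge\sum_i C_{\mathrm{MSAE}}(p_i)$ holds.
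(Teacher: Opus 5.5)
Your proposal is correct and follows the paper's proof essentially step for step: the trace identity $\|P_A-P_B\|_F^2=\dim A+\dim B-2\operatorname{Tr}(P_AP_B)$ with $\dim U_t=\dim\mathcal{M}_t=m_t$ (which the appendix does fix) gives $2q_t$ per prefix and, after swapping the summation, the total bound. The repair $\mathbf{n}_i\mapsto\mathbf{s}_{i,t}$ then gives $e^{\mathrm{MSAE}}_{i,t}=2$ exactly, and the rank-one projector bound handles the CSAE side.

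You go slightly beyond the paper on the existence of the repair vector, which the paper simply posits as an assumption. Your claim is right, because $\mathcal{M}_t\cap U_t^\perp=\mathcal{M}_t\cap V_t^\perp$ is the kernel of $P_{V_t}$ restricted to $\mathcal{M}_t$, whose dimension is at least $\dim\mathcal{M}_t-\dim V_t=\dim N_t\ge 1$. You should write that one-line count out rather than just assert it.
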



{\textbf{\modelnames{} Keep Semantic Errors Local.} Theorem~\ref{thm:shared_prefix_main} shows that the same number of local semantic errors can lead to a larger error in Matryoshka SAEs than in \modelnames{}.
In Matryoshka SAEs, an erroneous early decoder direction is reused by multiple prefixes, so its error is 
propagated to lower-level (children) concepts. 
In \modelnames{}, each erroneous atom assignment is counted once and does not propagate to lower-level concepts. Thus, \modelnames{} avoid shared-prefix amplification and yield \emph{smaller or identical semantic error} compared to Matryoshka SAEs under the same local error budget.}

\begin{table*}[t]
\centering
\vskip -0.4cm
\caption{
\textbf{HMS}$_{\mathrm{mean}}$ for different methods.
More HMS results are in~\appref{appsec:full_hms}. 
We mark the best results in \textbf{bold} and the second best with \underline{underline}.
}
\label{tab:hms_mean}
\vskip 0.1cm
\scriptsize
\setlength{\tabcolsep}{2.6pt}
\renewcommand{\arraystretch}{0.82}
\resizebox{0.96\linewidth}{!}{%
\begin{tabular}{llccccccccc}
\toprule
\textbf{MLLM} & \textbf{Data}
& \textbf{BTK}
& \textbf{TopK}
& \textbf{ReLU}
& \textbf{P-Ann.}
& \textbf{GSAE}
& \textbf{JReLU}
& \textbf{MSAE}
& \textbf{Stack.}
& \textbf{\modelname{}} \\
\midrule
\multirow{4}{*}{Qwen3}
& Color       & 0.421 & 0.122 & 0.214 & 0.545 & 0.655 & 0.147 & \underline{0.978} & 0.601 & \textbf{0.983} \\
& ImageNet    & 0.256 & 0.155 & 0.562 & \underline{0.639} & 0.561 & 0.076 & 0.584 & 0.332 & \textbf{0.778} \\
& COCO        & 0.127 & 0.151 & 0.328 & 0.126 & 0.281 & 0.121 & \underline{0.461} & 0.144 & \textbf{0.980} \\
& iNat.       & 0.318 & 0.325 & 0.193 & 0.416 & 0.261 & 0.102 & \underline{0.588} & 0.231 & \textbf{0.741} \\
\midrule
\multirow{4}{*}{Gemma-3}
& Color       & 0.667 & 0.445 & 0.222 & 0.660 & 0.566 & 0.160 & \underline{0.985} & 0.710 & \textbf{0.993} \\
& ImageNet    & 0.161 & 0.134 & 0.080 & 0.150 & 0.256 & 0.157 & \underline{0.687} & 0.488 & \textbf{0.775} \\
& COCO        & 0.280 & 0.053 & 0.466 & 0.160 & 0.666 & 0.167 & \underline{0.702} & 0.412 & \textbf{0.891} \\
& iNat.       & 0.503 & 0.230 & 0.089 & 0.148 & 0.225 & 0.415 & \underline{0.869} & 0.460 & \textbf{0.913} \\
\midrule
\multirow{4}{*}{LLaVA}
& Color       & 0.331 & 0.242 & 0.317 & 0.336 & 0.432 & 0.434 & \underline{0.962} & 0.577 & \textbf{0.971} \\
& ImageNet    & 0.191 & 0.246 & 0.110 & 0.493 & 0.195 & 0.410 & \underline{0.747} & 0.230 & \textbf{0.896} \\
& COCO        & 0.397 & 0.396 & 0.321 & 0.334 & 0.090 & 0.237 & \underline{0.811} & 0.284 & \textbf{0.941} \\
& iNat.       & 0.333 & 0.316 & 0.364 & 0.266 & 0.363 & 0.258 & \underline{0.759} & 0.260 & \textbf{0.780} \\
\bottomrule
\end{tabular}%
}
\renewcommand{\arraystretch}{1.0}
\vskip -0.4cm
\end{table*}

\begin{table*}[t]
\centering
\vskip -0.1cm
\caption{
\textbf{HMS}$_{\mathrm{med}}$ for different methods.
More HMS results are in~\appref{appsec:full_hms}. 
We mark the best results in \textbf{bold} and the second best with \underline{underline}.
}
\label{tab:hms_median}
\vskip 0.1cm
\scriptsize
\setlength{\tabcolsep}{2.6pt}
\renewcommand{\arraystretch}{0.82}
\resizebox{0.96\linewidth}{!}{%
\begin{tabular}{llccccccccc}
\toprule
\textbf{MLLM} & \textbf{Data}
& \textbf{BTK}
& \textbf{TopK}
& \textbf{ReLU}
& \textbf{P-Ann.}
& \textbf{GSAE}
& \textbf{JReLU}
& \textbf{MSAE}
& \textbf{Stack.}
& \textbf{\modelname{}} \\
\midrule
\multirow{4}{*}{Qwen3}
& Color       & 0.420 & 0.108 & 0.213 & 0.545 & 0.520 & 0.127 & \underline{0.999} & 0.628 & \textbf{0.999} \\
& ImageNet    & 0.141 & 0.139 & 0.604 & \underline{0.798} & 0.561 & 0.068 & 0.615 & 0.337 & \textbf{1.000} \\
& COCO        & 0.122 & 0.144 & 0.126 & 0.126 & 0.192 & 0.117 & \underline{0.460} & 0.147 & \textbf{1.000} \\
& iNaturalist       & 0.244 & 0.265 & 0.200 & 0.140 & 0.193 & 0.100 & \underline{0.598} & 0.225 & \textbf{0.974} \\
\midrule
\multirow{4}{*}{Gemma-3}
& Color       & 0.641 & 0.389 & 0.224 & 0.658 & 0.612 & 0.130 & \underline{0.985} & 0.661 & \textbf{0.994} \\
& ImageNet    & 0.159 & 0.135 & 0.080 & 0.090 & 0.088 & 0.087 & \underline{0.753} & 0.499 & \textbf{0.838} \\
& COCO        & 0.280 & 0.053 & 0.466 & 0.160 & \underline{0.828} & 0.174 & 0.725 & 0.508 & \textbf{1.000} \\
& iNaturalist       & 0.503 & 0.230 & 0.089 & 0.148 & 0.096 & 0.144 & \underline{0.886} & 0.456 & \textbf{0.961} \\
\midrule
\multirow{4}{*}{LLaVA}
& Color       & 0.322 & 0.240 & 0.248 & 0.325 & 0.353 & 0.309 & \underline{0.938} & 0.679 & \textbf{0.948} \\
& ImageNet    & 0.175 & 0.218 & 0.081 & 0.498 & 0.198 & 0.276 & \underline{0.779} & 0.199 & \textbf{0.965} \\
& COCO        & 0.330 & 0.338 & 0.254 & 0.271 & 0.090 & 0.207 & \underline{0.865} & 0.203 & \textbf{0.997} \\
& iNaturalist       & 0.298 & 0.287 & 0.312 & 0.242 & 0.293 & 0.212 & \underline{0.786} & 0.229 & \textbf{0.987} \\
\bottomrule
\end{tabular}%
}
\renewcommand{\arraystretch}{1.0}
\vskip -0.35cm
\end{table*}

\section{Experiments}
\label{sec:experiments}

We evaluate \modelname{} against multiple SAE baselines on three MLLM models and four visual datasets. 

\subsection{Experimental Setup}

\textbf{MLLMs and Datasets.}
We evaluate our method using \texttt{Qwen3-VL-4B-Instruct}, \texttt{Gemma-3-4B-IT}, and \texttt{LLaVA-1.5-13B}.
Activations are extracted from Qwen3-VL visual block 23, Gemma-3 vision tower layer 26, and LLaVA language backbone layer 39, respectively, using the prompt ``Describe the image content accurately.''
We use Color~\cite{wang2024probabilisticconceptualexplainerstrustworthy}, ImageNet~\cite{5206848}, iNaturalist~\cite{vanhorn2021benchmarkingrepresentationlearningnatural}, and COCO~\cite{lin2015microsoftcococommonobjects}.
Dataset sampling and implementation details are provided in~\appref{appsec:exp_setup}.

\textbf{Baselines.}
We compare with BatchTopK (\textbf{BTK})~\cite{bussmann2024batchtopk}, \textbf{TopK} SAE~\cite{gao2024scalingevaluatingsparseautoencoders}, \textbf{ReLU} SAE~\cite{bricken2023monosemanticity}, P-Annealing (\textbf{P-Ann.})~\cite{karvonen2024measuringprogressdictionarylearning}, Gated SAE (\textbf{GSAE})~\cite{rajamanoharan2024improvingdictionarylearninggated}, JumpReLU (\textbf{JReLU})~\cite{rajamanoharan2024jumpingaheadimprovingreconstruction}, Matryoshka SAE (\textbf{MSAE})~\cite{bussmann2025learningmultilevelfeaturesmatryoshka, zaigrajew2025interpretingcliphierarchicalsparse}, and Stacked SAE (\textbf{Stack.}).
More baseline details are in~\appref{appsec:baselines}.

\textbf{Metrics.}
We evaluate multi-level semantic coherence with Hierarchical Mono-Semanticity (HMS), adapted from the mono-semanticity score~\cite{pach2025sparseautoencoderslearnmonosemantic}.
For each Level-1 concept \(j\), we first compute a semantic vector \(\mathbf{R}_j\) as the average embedding of the top $N$ images that activate the concept most according to the evaluated SAE (e.g., MSAE and CSAE). (See~\appref{appsec:hms} for details.) 
Let \(\pi(j)\) denote the Level-2 parent of Level-1 concept \(j\), and let \(\mathcal{C}_k=\{j:\pi(j)=k\}\) be the children of Level-2 concept \(k\).
Similar to~\cite{pach2025sparseautoencoderslearnmonosemantic}, we define
\[
\mathrm{HMS}(k)
=
\frac{2}{m(m-1)}
\sum\nolimits_{\substack{u,v\in\mathcal{C}_k\\u<v}}
\frac{\mathbf{R}_u^\top\mathbf{R}_v}
{\|\mathbf{R}_u\|_2\|\mathbf{R}_v\|_2},
\qquad
m=|\mathcal{C}_k|.
\]
HMS measures whether Level-1 concepts under the same Level-2 parent are semantically coherent.
{In the main paper, we report
\(\mathrm{HMS}_{\mathrm{mean}}=\frac{1}{n_2}\sum_{k=1}^{n_2}\mathrm{HMS}(k)\)
and
\(\mathrm{HMS}_{\mathrm{med}}=\operatorname{median}_{k}\mathrm{HMS}(k)\)
over all Level-2 concepts,} 
Results for more metrics, $\mathrm{HMS}_{\min}$ and $\mathrm{HMS}_{\max}$,
are in~\appref{appsec:hms} and~\ref{appsec:full_hms}. 
{For concept steering, we use Gemini-2.5-Flash as an independent multimodal LLM judge to evaluate whether the target concept appears after insertion or is removed after suppression.}

\begin{figure*}[t]
  \centering
  \includegraphics[width=\linewidth]{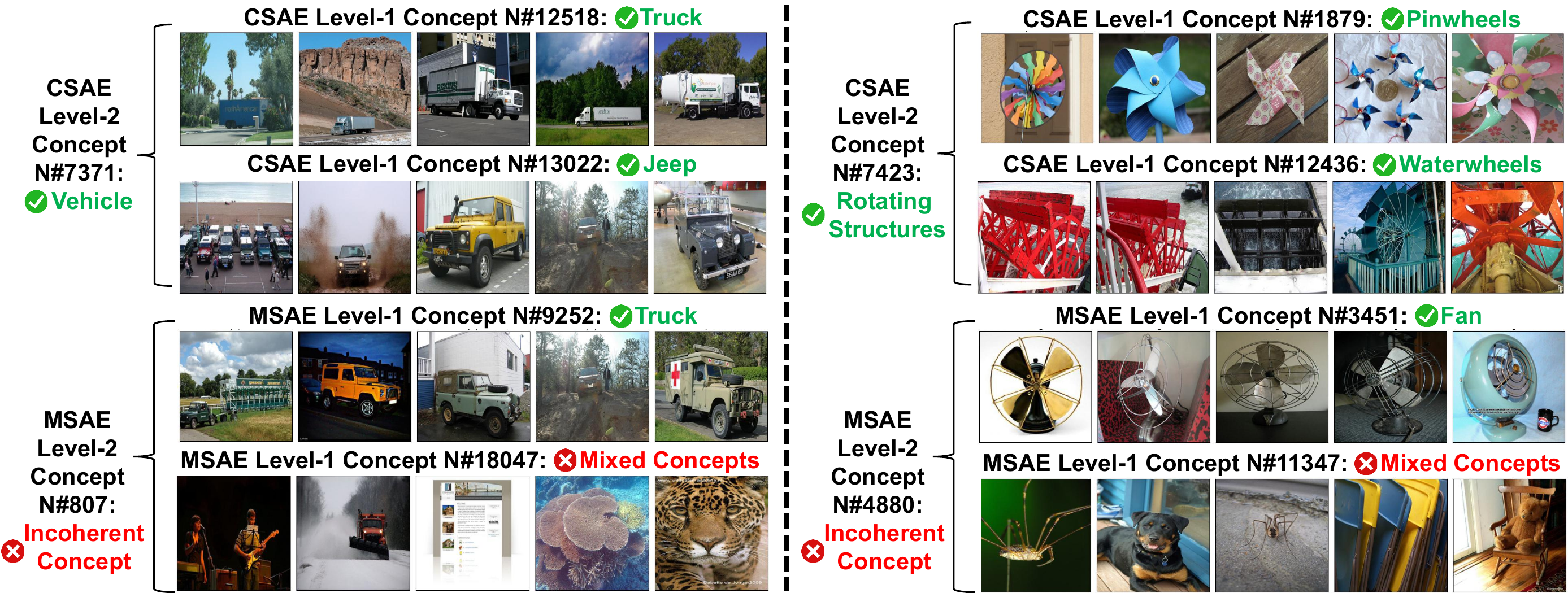}
   \vskip -0.15cm
   \caption{Qualitative examples of multi-level (Level-1 and Level-2) concepts discovered by CSAE and Matryoshka SAEs (MSAEs). \textbf{Left:} CSAE groups visually coherent Level-1 concepts, \emph{Truck} and \emph{Jeep}, into the same Level-2 concept, \emph{Vehicle}. In contrast, the matched MSAE concepts are less semantically consistent and often mix weaker or unrelated visual patterns. \textbf{Right:} CSAE groups visually coherent Level-1 concepts, \emph{Pinwheels} and \emph{Waterwheels}, into the same Level-2 concept, \emph{Rotating Structures}, while the two Level-1 concepts under the same MSAE Level-2 concept are less semantically consistent.
Additional qualitative results are provided in \appref{appsec:more_qualitative}. 
}\label{fig:qualitative}
   \vskip -0.05cm
\end{figure*}



\subsection{Results}
\label{sec:results}

\textbf{Quantitative Results.}
\tabref{tab:hms_mean} and \tabref{tab:hms_median} show HMS$_{\mathrm{mean}}$ and HMS$_{\mathrm{med}}$ across three MLLMs and four datasets.
\modelname{} achieves the best HMS$_{\mathrm{mean}}$ and HMS$_{\mathrm{med}}$ in all settings, with especially large gains on COCO and ImageNet, where learning high-level concepts is more challenging.
Matryoshka SAE is usually the strongest baseline, but it consistently trails \modelname{}, suggesting that explicit concept abstraction yields more coherent high-level concepts than shared-prefix hierarchies.
Complete HMS results, including HMS$_{\min}$, HMS$_{\mathrm{med}}$, HMS$_{\max}$ , and HMS$_{\mathrm{mean}}$, are provided in~\appref{appsec:full_hms}.

\textbf{Qualitative Results.}
\figref{fig:qualitative} visualizes representative multi-level concepts discovered by \modelname{} and Matryoshka SAE.
\modelname{} groups fine-grained but related Level-1 concepts under coherent \mbox{Level-2} abstractions, such as \emph{Truck}/\emph{Jeep} under a \emph{Vehicle} concept and \emph{Pinwheels}/\emph{Waterwheels} under a \emph{Rotating Structure} concept. 
Matched Matryoshka concepts are related but often mix less coherent visual patterns.
These examples support the HMS results: \modelname{} preserves visually specific Level-1 concepts while organizing them into coherent high-level groups.
Additional examples are provided in~\appref{appsec:more_qualitative}.

\begin{wraptable}{r}{0.5\textwidth}
\vspace{-6pt}
\centering
\caption{
Concept steering success rates on \texttt{Qwen3-VL-4B-Instruct}.
We report LLM-judged success rates on 1,000 COCO evaluations from 10 Level-2 concepts.
Higher is better.
}
\label{tab:steering}
\setlength{\tabcolsep}{1.3pt}
\begin{tabular}{lccccc}
\toprule
\textbf{Metric} 
& \textbf{NS} 
& \textbf{Rand.} 
& \textbf{MSAE} 
& \textbf{Single} 
& \textbf{CSAE} \\
\midrule
Appeared (\%) & 10.6 & 10.8 & 40.7 & 58.4 & \textbf{67.8} \\
Removed (\%)  & 0.0  & 12.4 & 25.2 & 35.6 & \textbf{40.5} \\
\bottomrule
\end{tabular}
\vspace{-10pt}
\end{wraptable}

\textbf{Concept Steering.}
Beyond concept discovery, we evaluate whether \modelname{} supports group-level interventions.
On COCO test images, we randomly select 10 Level-2 concepts and 100 images per concept, yielding 1,000 evaluations.
We steer all SAE Level-1 concepts under the same Level-2 concept and feed the reconstructed activations back into the MLLM residual stream.
The steering scale \(\textbf{u}_s\) is computed from the mean activation of the top-10 training images for each cluster; we use \(+3\textbf{u}_s\) for concept insertion and \(-3\textbf{u}_s\) for concept suppression.
We compare against no steering (\textbf{NS}) random concepts of matched cluster size (\textbf{Rand.}), matched Matryoshka SAE clusters (\textbf{MSAE}), and best individual CSAE Level-1 concept (\textbf{Single}).
We Gemini-2.5-Flash as an independent LLM judge.
As shown in~\tabref{tab:steering}, CSAE's steering achieves the strongest insertion rate (67.8\%) and suppression rate (40.5\%), outperforming other methods.

\figref{fig:steer} shows a case study on the ``\emph{Table}'' concept. 
We compare unsteered generation, matched Matryoshka SAE steering, the best individual CSAE Level-1 neuron, and full CSAE Level-2 clustered steering.
  In both insertion and suppression, CSAE's steering most effectively controls the generated response in terms of the ``\emph{Table}'' concept. 
More details are provided in~\appref{app:steering}.

\begin{figure*}[!t]
  \centering
  \vspace{0.0cm}
  \includegraphics[width=\linewidth]{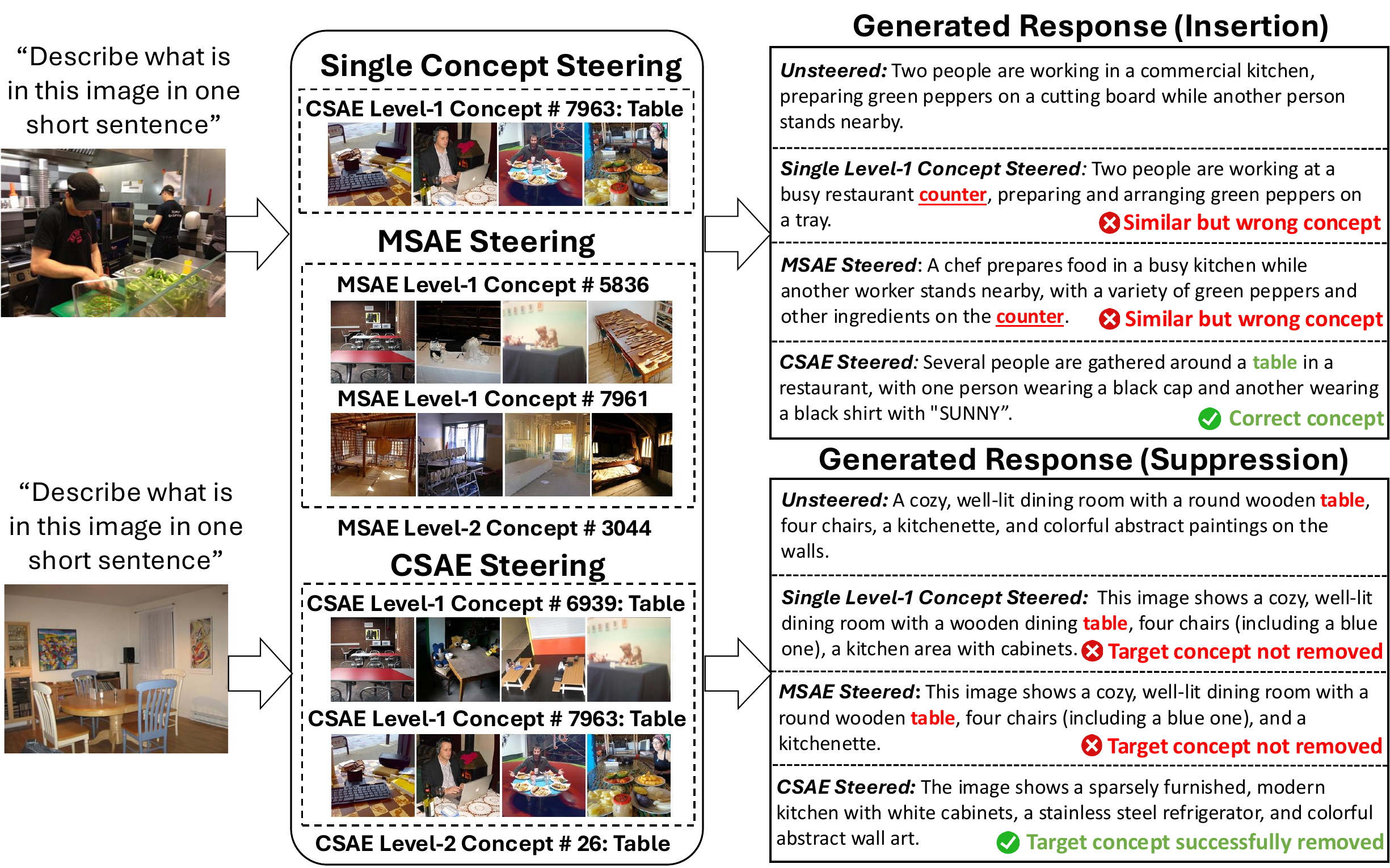}
  \vskip 0.0cm
  \caption{
  Concept steering on \texttt{Qwen3-VL-4B-Instruct}.
  We compare unsteered generation, matched Matryoshka SAE steering, the best individual CSAE Level-1 neuron, and full CSAE Level-2 clustered steering.
  In both insertion and suppression, CSAE's steering most effectively controls the generated response in terms of the ``\emph{Table}'' concept. 
  }
  \label{fig:steer}
  \vskip -0.3cm
\end{figure*}

\textbf{Ablation Studies.}
\tabref{tab:ablation} evaluates which components are responsible for the gains of \modelnames{}: (1) Restricting gradient flow between the two SAE levels hurts performance: both partial and bilateral stop-gradient variants (\textbf{Partial SG} and \textbf{Bilateral SG}, details in~\appref{appsec:stopgrad}) lead to lower {HMS} scores compared with full joint training (\textbf{CSAE (Full))}, indicating that the Level-2 objective should shape the Level-1 atoms rather than merely cluster a fixed dictionary. (2) The Stacked SAE (\textbf{Stacked}) performs substantially worse, supporting our theoretical argument that naively re-compressing sparse activation codes is poorly suited for multi-level concept learning. {(3) Replacing the Level-2 SAE with post-hoc clustering on Level-1 decoder weight columns $\w_j$, including Hierarchical Agglomerative Clustering (\textbf{HAC}) and Spectral Clustering (\textbf{SC}), also underperforms our full CSAE.
This suggests that the learned Level-2 sparse abstraction is more effective than post-hoc clustering.} 

\begin{table}[!t]
\centering
\caption{
Ablation studies on ImageNet with \texttt{Qwen3-VL-4B-Instruct}.
Results on more HMS metrics are available in~\appref{appsec:full_hms}. 
}
\label{tab:ablation}
\vskip -0.1cm
\setlength{\tabcolsep}{7pt}
\begin{tabular}{lcccccc}
\toprule
\textbf{Metric}
& \textbf{Partial SG}
& \textbf{Bilateral SG}
& \textbf{Stacked}
& \textbf{HAC}
& \textbf{SC}
& \textbf{CSAE (Full)} \\
\midrule
\textbf{HMS}$_{\mathrm{med}}$
& 0.8280 & 0.7012 & 0.3367 & \underline{0.8587} & 0.6906 & \textbf{0.9999} \\
\textbf{HMS}$_{\mathrm{mean}}$
& \underline{0.7099} & 0.6333 & 0.3322 & 0.7025 & 0.6490 & \textbf{0.7776} \\
\bottomrule
\end{tabular}
\renewcommand{\arraystretch}{1.0}
\vskip -0.45cm
\end{table}

\section{Conclusion}
\label{sec:contribution}
We introduced \modelname{}, a cascaded SAE framework for hierarchical concept discovery in MLLMs.
By training a Level-2 SAE on Level-1 decoder atoms, \modelname{} learns abstractions over concept directions while avoiding the bottleneck and shared-prefix limitations of stacked and Matryoshka-style designs.
Experiments across three MLLM families and four datasets show that \modelname{} improves hierarchical coherence and enables effective concept-level steering.
Future work includes extending the framework to deeper hierarchies, additional modalities, and language-side representations. See \appref{appsec:limit} for detailed limitations such as reliance on Level-1 SAE quality.


\bibliographystyle{unsrt}
\bibliography{main}

\input{appendix}
\newpage

\end{document}

%% file: defs.tex
\def\b{{\bf b}}

\def\f{{\bf f}}

\def\x{{\bf x}}

\def\u{{\bf u}}

\def\W{{\bf W}}
\def\w{{\bf w}}
\def\0{{\bf 0}}
\def\1{{\bf 1}}

\def\SM{{\mathcal S}}

\def\tha{\mbox{\boldmath$\theta$\unboldmath}}

\def\argmax{\mathop{\rm argmax}}

\newtheorem{remark}{Remark}
\newtheorem{theorem}{Theorem}
\newtheorem{lemma}{Lemma}
\newtheorem{definition}{Definition}

\newtheorem{proposition}{Proposition}

\numberwithin{theorem}{section}
\numberwithin{lemma}{section}
\numberwithin{remark}{section}
\numberwithin{cor}{section}
\numberwithin{proposition}{section}
\numberwithin{definition}{section}

\newcommand{\tabref}[1]{Table~\ref{#1}}
\newcommand{\secref}[1]{Sec.~\ref{#1}}
\newcommand{\appref}[1]{Appendix~\ref{#1}}
\newcommand{\figref}[1]{Fig.~\ref{#1}}

\newcommand{\eqnref}[1]{Eqn. (\ref{#1})}

\renewcommand{\hat}{\widehat}

%% file: appendix.tex
\appendix
\onecolumn

\section{Limitations}
\label{appsec:limit}

\paragraph{Scope of evaluation.}
Our experiments focus on hierarchical visual concept discovery in MLLMs.
We evaluate CSAE on three model families and four visual datasets, but the current study does not fully cover other modalities, such as audio, video, or language-side reasoning representations.
Moreover, our main quantitative evaluation uses HMS to measure whether Level-1 concepts under the same Level-2 parent are semantically coherent.
Although HMS captures an important aspect of hierarchical interpretability, it does not exhaust all possible notions of a useful concept hierarchy.
For example, different downstream tasks may require hierarchies organized by function, causality, compositionality, or task relevance rather than visual similarity alone.
Extending CSAE to additional modalities, tasks, and hierarchy metrics is an important direction for future work.

\paragraph{Dependence on Level-1 SAE quality and decoder-atom abstraction.}
CSAE assumes that the Level-1 SAE learns meaningful low-level concept directions, and then trains the Level-2 SAE on the corresponding Level-1 decoder atoms.
This design is effective when Level-1 decoder directions are interpretable and semantically organized, but the quality of the learned hierarchy can degrade if the Level-1 SAE contains noisy, dead, highly polysemantic, or poorly reconstructed features.
The method also relies on the interpretation that decoder atoms can serve as inputs for higher-level abstraction.
While our empirical results support this design, we do not exhaustively study how CSAE performance changes with different Level-1 SAE architectures, sparsity levels, expansion ratios, target layers, or SAE training datasets.
A more systematic analysis of these design choices would help establish robust defaults for applying CSAE to new models.

\paragraph{Computation, steering, and evaluation limitations.}
CSAE introduces additional training-time computation compared with a single flat SAE because it jointly trains a Level-2 SAE over Level-1 decoder atoms.
This overhead is manageable in our experiments, but may become more significant for larger activation dimensions, larger SAE dictionaries, or deeper hierarchies with more than two levels.
For concept steering, we evaluate a limited set of Level-2 concepts and use an external multimodal LLM judge to estimate whether the target concept appears or is removed.
This provides useful evidence that CSAE concepts can support group-level interventions, but it is not a complete measure of causal control or safety.
Future work should evaluate steering over more concepts, more prompts, human or task-specific judgments, and possible side effects of interventions on unrelated visual concepts.

\section{Theoretical Analysis}
\label{appsec:theory}

This appendix provides the complete assumptions and proofs for the theoretical claims in~\secref{sec:theory}.
We first prove the sparse-bottleneck failure of naive stacked SAEs, and then prove the shared-prefix error amplification result for Matryoshka SAEs.

\subsection{Impossibility of Naive Stacked SAE Bottlenecks}
\label{appsec:stacked_full_proof}

We analyze a stacked SAE in which a first SAE produces a sparse code in \(\mathbb{R}^n\), and a second SAE compresses this code through a smaller and sparser bottleneck in \(\mathbb{R}^m\).
Let
\[
\mathcal{S}_{n,k_1}(B)
:=
\left\{
\mathbf{z}\in\mathbb{R}^n:
\|\mathbf{z}\|_0\le k_1,\;
\|\mathbf{z}\|_2\le B
\right\}
\]
denote the set of possible Level-1 sparse codes.
A stacked SAE introduces an encoder-decoder pair
\[
g:\mathbb{R}^n\to\mathbb{R}^m,
\qquad
h:\mathbb{R}^m\to\mathbb{R}^n,
\]
with the goal of reconstructing \(\mathbf{z}\in\mathcal{S}_{n,k_1}(B)\) from \(g(\mathbf{z})\).

We assume:
\begin{enumerate}[nosep]
\item \textbf{Uniform reconstruction:}
\[
\sup_{\mathbf{z}\in\mathcal{S}_{n,k_1}(B)}
\|h(g(\mathbf{z}))-\mathbf{z}\|_2 \le \varepsilon .
\]
\item \textbf{Sparse bottleneck:}
\[
g(\mathbf{z})\in\mathcal{S}_{m,k_2}(R)
\quad
\text{for all }\mathbf{z}\in\mathcal{S}_{n,k_1}(B),
\qquad
k_2<k_1 .
\]
\item \textbf{Decoder regularity:}
\(h\) is \(L_h\)-Lipschitz:
\[
\|h(\mathbf{u})-h(\mathbf{v})\|_2
\le
L_h\|\mathbf{u}-\mathbf{v}\|_2 .
\]
\end{enumerate}

\subsubsection{Auxiliary Lemmas}

\begin{definition}[\(\rho\)-separated set]
\label{def:separated_app}
Let \((\mathcal{X},\|\cdot\|_2)\) be a metric space.
A set \(\mathcal{P}\subset\mathcal{X}\) is \(\rho\)-separated if
\[
\|\mathbf{x}-\mathbf{y}\|_2\ge \rho
\quad
\text{for all distinct }\mathbf{x},\mathbf{y}\in\mathcal{P}.
\]
\end{definition}

\begin{lemma}[Uniform reconstruction implies bottleneck separation]
\label{lem:separation_app}
If
\[
\sup_{\mathbf{z}\in\mathcal{S}_{n,k_1}(B)}
\|h(g(\mathbf{z}))-\mathbf{z}\|_2 \le \varepsilon ,
\]
then for all \(\mathbf{z}_1,\mathbf{z}_2\in\mathcal{S}_{n,k_1}(B)\),
\[
\|g(\mathbf{z}_1)-g(\mathbf{z}_2)\|_2
\ge
\frac{\|\mathbf{z}_1-\mathbf{z}_2\|_2-2\varepsilon}{L_h}.
\]
\end{lemma}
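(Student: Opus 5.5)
The plan is a direct triangle-inequality argument combined with the Lipschitz property of \(h\), both applied to the pair of reconstructions. Fix \(\mathbf{z}_1,\mathbf{z}_2\in\mathcal{S}_{n,k_1}(B)\) and write \(\hat{\mathbf{z}}_i=h(g(\mathbf{z}_i))\). The two ingredients are the uniform reconstruction bound \(\|\hat{\mathbf{z}}_i-\mathbf{z}_i\|_2\le\varepsilon\) for \(i=1,2\) and the decoder regularity assumption, which gives \(\|\hat{\mathbf{z}}_1-\hat{\mathbf{z}}_2\|_2\le L_h\|g(\mathbf{z}_1)-g(\mathbf{z}_2)\|_2\).

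First decompose
\[
\mathbf{z}_1-\mathbf{z}_2=(\mathbf{z}_1-\hat{\mathbf{z}}_1)+(\hat{\mathbf{z}}_1-\hat{\mathbf{z}}_2)+(\hat{\mathbf{z}}_2-\mathbf{z}_2).
\]
The triangle inequality then yields
\[
\|\mathbf{z}_1-\mathbf{z}_2\|_2\le 2\varepsilon+\|\hat{\mathbf{z}}_1-\hat{\mathbf{z}}_2\|_2 .
\]
Next bound the middle term by the Lipschitz estimate. This gives
\[
\|\mathbf{z}_1-\mathbf{z}_2\|_2-2\varepsilon\le L_h\|g(\mathbf{z}_1)-g(\mathbf{z}_2)\|_2 .
\]
Dividing by \(L_h\) gives the claim.

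Some care is needed with the degenerate cases. If \(L_h=0\), then \(h\) is constant, so both reconstructions coincide and the inequality before division already forces \(\|\mathbf{z}_1-\mathbf{z}_2\|_2\le 2\varepsilon\). I will therefore assume \(L_h>0\), as the statement implicitly does by dividing by it. When the right-hand side is negative, the bound holds trivially, so no case split is needed beyond this.

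There is no genuine obstacle here. The only point worth flagging is that the sparse-bottleneck assumption is not used in this lemma at all. This lemma supplies only the separation transfer: a \(\rho\)-separated packing of \(\mathcal{S}_{n,k_1}(B)\) with \(\rho>2\varepsilon\) maps under \(g\) to a \((\rho-2\varepsilon)/L_h\)-separated set inside \(\mathcal{S}_{m,k_2}(R)\). The sparsity \(k_2<k_1\) enters only afterwards, in Theorem~\ref{thm:stacked_failure_sparse_main}, where packing numbers of the two sparse sets are compared.
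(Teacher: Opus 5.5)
Your proof is correct and matches the paper's argument: the same three-term triangle inequality, with the outer terms bounded by \(\varepsilon\) via uniform reconstruction and the middle term by the Lipschitz bound on \(h\), followed by rearranging. Your remark on the degenerate case \(L_h=0\) is a harmless addition that the paper leaves implicit.
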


\begin{proof}
By the triangle inequality,
\begin{align}
\|\mathbf{z}_1-\mathbf{z}_2\|_2
&\le
\|\mathbf{z}_1-h(g(\mathbf{z}_1))\|_2
+
\|h(g(\mathbf{z}_1))-h(g(\mathbf{z}_2))\|_2 \nonumber\\
&\quad+
\|h(g(\mathbf{z}_2))-\mathbf{z}_2\|_2 .
\end{align}
The first and third terms are at most \(\varepsilon\), and the middle term is at most
\(L_h\|g(\mathbf{z}_1)-g(\mathbf{z}_2)\|_2\).
Rearranging proves the result.
\end{proof}

\begin{lemma}[Packing lower bound for sparse codes]
\label{lem:packing_sparse_app}
For any \(\rho\in(0,B]\), there exists a \(\rho\)-separated set
\(\mathcal{P}\subset\mathcal{S}_{n,k_1}(B)\) such that
\[
|\mathcal{P}|
\ge
\binom{n}{k_1}
\left(\frac{cB}{\rho}\right)^{k_1}
\]
for a universal constant \(c>0\).
\end{lemma}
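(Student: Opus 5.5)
The construction is a product of two independent choices. First we fix which coordinates are active by choosing a support set. Then, on that support, we place a well-spread family of value vectors. Separation will come from a single point set within one support; the factor $\binom{n}{k_1}$ will come from keeping different supports far apart.

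\textbf{Step 1 (one support).} Fix a support $T\subset\{1,\dots,n\}$ with $|T|=k_1$. Among vectors supported on $T$, consider those whose restriction to $T$ lies in the shell
\[
\{\mathbf{v}\in\mathbb{R}^{k_1}:\ B/2\le\|\mathbf{v}\|_2\le B\}.
\]
Take a maximal $\rho$-separated subset $\mathcal{Q}_T$ of this shell. The shell has volume at least $(1-2^{-k_1})\,\mathrm{vol}(B\,\mathbb{B}^{k_1})$. By maximality, balls of radius $\rho$ around the points of $\mathcal{Q}_T$ cover the shell, so the standard volume comparison gives
\[
|\mathcal{Q}_T|\ \ge\ \tfrac12\,(B/\rho)^{k_1}.
\]
Moreover, every point of $\mathcal{Q}_T$ has at least one nonzero coordinate of magnitude at least $B/(2\sqrt{k_1})$.

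\textbf{Step 2 (across supports).} Points with different supports must also be separated. This is the main obstacle: two vectors on overlapping supports can be arbitrarily close if their mass sits on the shared coordinates.

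To avoid this, restrict to vectors whose nonzero entries all have magnitude in the range $[B/(2\sqrt{k_1}),\,B/\sqrt{k_1}]$. Then any two vectors with distinct supports differ on some coordinate where exactly one of them is zero. That coordinate alone gives distance at least $B/(2\sqrt{k_1})$.

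The value-grid inside this box is a cube of side $B/(2\sqrt{k_1})$ in each coordinate, with $2^{k_1}$ sign patterns. A $\rho$-separated grid in the box (mesh $\rho$) has at least
\[
\Bigl(\frac{B}{2\sqrt{k_1}\,\rho}\Bigr)^{k_1}
\]
points whenever $\rho\le B/(2\sqrt{k_1})$.

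The catch is that this yields only $(cB/(\sqrt{k_1}\rho))^{k_1}$, which loses a factor $k_1^{-k_1/2}$ compared with the claim. To recover the claimed $(cB/\rho)^{k_1}$, I would replace the box by the cross-polytope/ball shell and control the small-coordinate issue differently. Points sharing a support are compared via the packing of Step 1. Points with different supports are handled by a Gilbert–Varshamov-type pruning: keep only a subfamily of supports with pairwise symmetric difference at least $k_1/2$. This costs at most a factor $c^{k_1}$ in the count of $\binom{n}{k_1}$, which the universal constant absorbs. We also require all coordinates to be at least $B/(4\sqrt{k_1})$ in magnitude, which retains a constant fraction $c^{k_1}$ of the Step 1 volume. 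The symmetric difference of $k_1/2$ coordinates then gives cross-support distance at least
\[
\sqrt{k_1/2}\cdot\frac{B}{4\sqrt{k_1}}=\frac{B}{4\sqrt2}\ \ge\ \rho
\]
once we restrict to $\rho\le cB$.

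\textbf{Step 3 (large $\rho$).} The remaining range $\rho\in(cB,B]$ is trivial after shrinking $c$. There the claimed bound is at most $\binom{n}{k_1}$ times a factor below $1$. It is met by the scaled indicator vectors $\frac{B}{\sqrt{k_1}}\mathbf{1}_T$, taken over a family of supports $T$ with large pairwise symmetric difference; if needed, use $c\le 1/4$ and check the pairwise distance directly.

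\textbf{Step 4 (assemble).} Multiply the number of retained supports by the per-support packing count from Step 1 (after the magnitude restriction in Step 2), and absorb all the $c^{k_1}$ losses into the universal constant. This gives $|\mathcal{P}|\ge\binom{n}{k_1}(cB/\rho)^{k_1}$ as claimed.

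The delicate part is Step 2: one must ensure that lower-bounding the magnitude of every entry, together with the symmetric-difference pruning of supports, costs only exponential-in-$k_1$ factors and not a $k_1^{k_1/2}$ factor.
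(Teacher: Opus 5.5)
\textbf{Gap: the support-pruning step.} In Step 2 (and again in Step 3) you claim that passing to supports with pairwise symmetric difference at least $k_1/2$ costs only a factor $c^{k_1}$. It costs much more. Such supports meet pairwise in at most $3k_1/4$ indices, so each set of $s=\lfloor 3k_1/4\rfloor+1$ indices lies in at most one of them. The family therefore has at most $\binom{n}{s}/\binom{k_1}{s}=\binom{n}{k_1}/\binom{n-s}{k_1-s}$ members. The lost factor $\binom{n-s}{k_1-s}$ grows like $(n/k_1)^{k_1/4}$; for fixed $k_1\ge 5$ it is unbounded in $n$, so no universal constant absorbs it.

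Step 3 has the same defect. Flat vectors $\frac{B}{\sqrt{k_1}}\mathbf{1}_T$ at mutual distance $\rho>cB$ need $|T\triangle T'|>c^2k_1$. The same count shows there are far fewer than $\binom{n}{k_1}c^{k_1}$ of them when $k_1$ and $n/k_1$ are large. Your per-support counts are fine; the failure is entirely in how many supports survive.

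\textbf{The statement is false in this regime.} This is not a defect of your particular construction: the bound fails when $\rho$ is a large multiple of $B/\sqrt{k_1}$ and $n/k_1$ is large. Take $\rho=B$, $k_1\ge16$, and any $B$-separated $\mathcal{P}\subset\mathcal{S}_{n,k_1}(B)$. Zero the smallest-magnitude entry of each $p\in\mathcal{P}$ that has $k_1$ nonzeros. This moves $p$ by at most $\|p\|_2/\sqrt{k_1}\le B/4$, so the truncated points are distinct, $B/2$-separated, and lie in $\mathcal{S}_{n,k_1-1}(B)$. By Lemma~\ref{lem:packing_bottleneck_app} there are at most $\binom{n}{k_1-1}5^{k_1-1}$ of them, while $\binom{n}{k_1}/\binom{n}{k_1-1}=(n-k_1+1)/k_1$ is unbounded in $n$. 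The same truncation refutes the bound, for large $n$, at every $\rho\ge 3B/\sqrt{k_1}$. This includes $\rho=B/2$, the value used in Theorem~\ref{thm:stacked_failure_sparse_app}, once $k_1\ge36$.

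The paper's one-line proof does not avoid this. It takes the union of per-support packings without checking separation across overlapping supports, which is exactly the obstacle you identified.

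\textbf{What can be salvaged.} The claim does hold for $\rho\le B/(2\sqrt{k_1})$, and your first box construction already proves it there. Replace the mesh-$\rho$ grid by a maximal $\rho$-separated subset of the box and count by volume. The box has volume $(B/(2\sqrt{k_1}))^{k_1}$ and $\mathrm{vol}(\rho\,\mathbb{B}^{k_1})\le(2\pi e/k_1)^{k_1/2}\rho^{k_1}$. This gives at least $\bigl(B/(2\sqrt{2\pi e}\,\rho)\bigr)^{k_1}$ points per support. Distinct supports differ in a coordinate of size at least $B/(2\sqrt{k_1})\ge\rho$, so the union is $\rho$-separated. The $k_1^{-k_1/2}$ loss you found came from counting grid points, not from the box.
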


\begin{proof}
Fix a support \(S\subset[n]\) with \(|S|=k_1\).
Restricted to this support, \(\mathcal{S}_{n,k_1}(B)\) contains a \(k_1\)-dimensional Euclidean ball of radius \(B\), which admits a \(\rho\)-packing of size at least \((cB/\rho)^{k_1}\) for a universal constant \(c>0\).
Taking the union over all \(\binom{n}{k_1}\) supports yields the stated lower bound.
\end{proof}

\begin{lemma}[Packing upper bound for sparse bottleneck codes]
\label{lem:packing_bottleneck_app}
Let
\[
\mathcal{S}_{m,k_2}(R)
=
\left\{
\mathbf{u}\in\mathbb{R}^m:
\|\mathbf{u}\|_0\le k_2,\;
\|\mathbf{u}\|_2\le R
\right\}.
\]
Any \(\delta\)-separated subset
\(\mathcal{Q}\subset\mathcal{S}_{m,k_2}(R)\) satisfies
\[
|\mathcal{Q}|
\le
\binom{m}{k_2}
\left(1+\frac{2R}{\delta}\right)^{k_2}.
\]
\end{lemma}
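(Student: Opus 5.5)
The plan is to decompose the bottleneck set by support and then use a standard volume argument inside each support. First write
\[
\mathcal{S}_{m,k_2}(R)=\bigcup_{T\subset[m],\,|T|=k_2}\mathcal{B}_T(R),
\qquad
\mathcal{B}_T(R):=\{\mathbf{u}\in\mathbb{R}^m:\operatorname{supp}(\mathbf{u})\subseteq T,\ \|\mathbf{u}\|_2\le R\}.
\]
This covers every vector with \(\|\mathbf{u}\|_0\le k_2\), because a support of size less than \(k_2\) can be enlarged to a set of size exactly \(k_2\). Supports of smaller size therefore need no separate treatment, and there are exactly \(\binom{m}{k_2}\) pieces. (If \(k_2>m\), the statement is vacuous or we replace \(k_2\) by \(m\); I will assume \(k_2\le m\).)

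Given a \(\delta\)-separated \(\mathcal{Q}\subset\mathcal{S}_{m,k_2}(R)\), assign each \(\mathbf{u}\in\mathcal{Q}\) to one piece \(T\) containing it. This gives a partition \(\mathcal{Q}=\bigsqcup_T \mathcal{Q}_T\) with \(\mathcal{Q}_T\subset\mathcal{B}_T(R)\), so that \(|\mathcal{Q}|=\sum_T|\mathcal{Q}_T|\). Each \(\mathcal{Q}_T\) is still \(\delta\)-separated, and \(\mathcal{B}_T(R)\) is isometric to the Euclidean ball of radius \(R\) in \(\mathbb{R}^{k_2}\).

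The key step is the classical volumetric packing bound inside a single piece. The open balls of radius \(\delta/2\) centred at the points of \(\mathcal{Q}_T\), taken within the \(k_2\)-dimensional coordinate subspace, are pairwise disjoint by separation. They are also all contained in the ball of radius \(R+\delta/2\). Comparing \(k_2\)-dimensional volumes gives
\[
|\mathcal{Q}_T|\,(\delta/2)^{k_2}\le (R+\delta/2)^{k_2},
\qquad\text{so}\qquad
|\mathcal{Q}_T|\le\Bigl(1+\tfrac{2R}{\delta}\Bigr)^{k_2}.
\]
Summing this bound over the \(\binom{m}{k_2}\) supports yields the claim.

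There is no real obstacle here; the only care points are the two just handled. The volume comparison must be done in the \(k_2\)-dimensional subspace rather than in \(\mathbb{R}^m\), and each point must be counted only once when it lies in several pieces (for instance points with smaller support). The partition takes care of the second issue, and the inequality holds either way since the union bound over pieces only overcounts.
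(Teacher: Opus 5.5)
Your proposal is correct and matches the paper's proof: decompose $\mathcal{S}_{m,k_2}(R)$ by size-$k_2$ supports, apply on each piece the volume argument with disjoint radius-$\delta/2$ balls inside $B_{k_2}(R+\delta/2)$, and sum over the $\binom{m}{k_2}$ supports. The only cosmetic difference is that you partition $\mathcal{Q}$ across the pieces, whereas the paper intersects and uses $|\mathcal{Q}|\le\sum_T|\mathcal{Q}\cap\mathcal{S}_T(R)|$; both are valid.
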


\begin{proof}
Decompose \(\mathcal{S}_{m,k_2}(R)\) by supports.
For each \(T\subset[m]\) with \(|T|=k_2\), define
\[
\mathcal{S}_T(R)
:=
\{\mathbf{u}\in\mathbb{R}^m:\operatorname{supp}(\mathbf{u})\subseteq T,\ \|\mathbf{u}\|_2\le R\}.
\]
Then
\[
\mathcal{S}_{m,k_2}(R)
\subseteq
\bigcup_{T\subset[m],\,|T|=k_2}\mathcal{S}_T(R).
\]
For a \(\delta\)-separated set \(\mathcal{Q}\), let \(\mathcal{Q}_T=\mathcal{Q}\cap\mathcal{S}_T(R)\).
Then
\[
|\mathcal{Q}|
\le
\sum_{T:\,|T|=k_2}|\mathcal{Q}_T|.
\]

For fixed \(T\), \(\mathcal{S}_T(R)\) is isometric to a \(k_2\)-dimensional Euclidean ball of radius \(R\).
If \(Q'\subset B_{k_2}(R)\) is \(\delta\)-separated, then balls of radius \(\delta/2\) centered at points in \(Q'\) are disjoint and contained in \(B_{k_2}(R+\delta/2)\).
A volume comparison gives
\[
|Q'|\operatorname{Vol}(B_{k_2}(\delta/2))
\le
\operatorname{Vol}(B_{k_2}(R+\delta/2)),
\]
and hence
\[
|Q'|
\le
\left(\frac{R+\delta/2}{\delta/2}\right)^{k_2}
=
\left(1+\frac{2R}{\delta}\right)^{k_2}.
\]
Summing over the \(\binom{m}{k_2}\) supports proves the claim.
\end{proof}

\subsubsection{Main Impossibility Result}

\begin{theorem}[Failure of stacked SAEs with a sparser bottleneck]
\label{thm:stacked_failure_sparse_app}
Assume uniform reconstruction, a \(k_2\)-sparse bottleneck with \(k_2<k_1\), and an \(L_h\)-Lipschitz decoder.
Let \(\varepsilon\le B/8\).
If the bottleneck capacity condition
\begin{align}
\log\binom{m}{k_2}
+
k_2\log\!\left(1+\frac{8RL_h}{B}\right)
<
\log\binom{n}{k_1}
+
k_1\log(2c)
\label{eq:stacked_capacity_condition_app}
\end{align}
holds, where \(c>0\) is the constant from Lemma~\ref{lem:packing_sparse_app}, then no such encoder-decoder pair \((g,h)\) can uniformly reconstruct \(\mathcal{S}_{n,k_1}(B)\).
In particular, this condition holds whenever the bottleneck is not large enough to compensate for the sparsity drop from \(k_1\) to \(k_2\), including the common fixed-\(k_2<k_1\), \(m<n\) regime for sufficiently large \(n/k_1\).
\end{theorem}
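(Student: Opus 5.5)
We argue by contradiction, combining the three auxiliary lemmas into a counting argument. Suppose some pair \((g,h)\) satisfies uniform reconstruction with \(\varepsilon\le B/8\), the \(k_2\)-sparse bottleneck \(g(\mathbf{z})\in\mathcal{S}_{m,k_2}(R)\), and \(L_h\)-Lipschitz \(h\).

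First, we fix the separation scale \(\rho=B/2\), which lies in \((0,B]\). By Lemma~\ref{lem:packing_sparse_app} there is a \(\rho\)-separated set \(\mathcal{P}\subset\mathcal{S}_{n,k_1}(B)\) with
\[
|\mathcal{P}|\ge\binom{n}{k_1}(2c)^{k_1}.
\]
Next, we push \(\mathcal{P}\) through the encoder. For distinct \(\mathbf{z}_1,\mathbf{z}_2\in\mathcal{P}\), Lemma~\ref{lem:separation_app} gives
\[
\|g(\mathbf{z}_1)-g(\mathbf{z}_2)\|_2\ge\frac{B/2-2\varepsilon}{L_h}\ge\frac{B/2-B/4}{L_h}=\frac{B}{4L_h}=:\delta>0 .
\]
Two consequences follow:
\begin{itemize}[nosep]
\item \(g\) is injective on \(\mathcal{P}\), so \(|g(\mathcal{P})|=|\mathcal{P}|\);
\item \(g(\mathcal{P})\) is a \(\delta\)-separated subset of \(\mathcal{S}_{m,k_2}(R)\).
\end{itemize}

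Applying Lemma~\ref{lem:packing_bottleneck_app} with \(\delta=B/(4L_h)\) then bounds
\[
|\mathcal{P}|=|g(\mathcal{P})|\le\binom{m}{k_2}\Bigl(1+\frac{8RL_h}{B}\Bigr)^{k_2}.
\]
Chaining the lower and upper bounds and taking logarithms yields
\[
\log\binom{n}{k_1}+k_1\log(2c)\le\log\binom{m}{k_2}+k_2\log\Bigl(1+\frac{8RL_h}{B}\Bigr).
\]
This directly contradicts the capacity condition \eqref{eq:stacked_capacity_condition_app}, which proves the main claim. The choice \(\rho=B/2\) with \(\varepsilon\le B/8\) is exactly what produces the constants \(2c\) and \(8RL_h/B\), which confirms it is the intended calibration.

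The remaining step, and the one requiring the most care, is the ``in particular'' clause: we must show that \eqref{eq:stacked_capacity_condition_app} actually holds in the regime \(m<n\), fixed \(k_2<k_1\), and \(n/k_1\) large. The plan is to compare growth rates using
\[
\log\binom{n}{k_1}\ge k_1\log(n/k_1),\qquad \log\binom{m}{k_2}\le k_2\log(em/k_2)\le k_2\log(en/k_2).
\]
The left side of the condition is then at most \(k_2\log n+O(k_2)\). The right side is at least \(k_1\log(n/k_1)-O(k_1)\), and this is negative only if \(c<1/2\). The difference is therefore at least
\[
(k_1-k_2)\log n-k_1\log k_1-O(k_1),
\]
which tends to infinity as \(n/k_1\to\infty\), provided \(k_1\) does not grow too fast relative to \(n\). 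This proviso is precisely what ``sufficiently large \(n/k_1\)'' encodes.

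The one subtlety is that \(R\) and \(L_h\) are treated as fixed (or at most polynomially growing) parameters. Under that assumption the \(k_2\log(1+8RL_h/B)\) term is \(O(k_2)\) and is absorbed into the gap. We state this dependence explicitly rather than leave it implicit.
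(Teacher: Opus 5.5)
Your argument is step for step the paper's own proof: $\rho=B/2$, Lemma~\ref{lem:separation_app} for $B/(4L_h)$-separation and injectivity, Lemma~\ref{lem:packing_bottleneck_app} on $g(\mathcal P)$, then a comparison of logarithms. Each deduction is valid \emph{given} the three lemmas. The genuine gap, which you inherit from the paper, is Lemma~\ref{lem:packing_sparse_app}. Its proof takes a union of per-support packings, but that union is not $\rho$-separated: two vectors that agree on $k_1-1$ coordinates and each carry one small extra entry are close. The lemma is in fact false for large $k_1$. Deleting the smallest entry of $\mathbf{z}\in\mathcal S_{n,k_1}(B)$ moves it by at most $B/\sqrt{k_1}$ into $\mathcal S_{n,k_1-1}(B)$. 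So for $k_1\ge 64$, any $B/2$-separated set maps injectively onto a $B/4$-separated subset of $\mathcal S_{n,k_1-1}(B)$. Lemma~\ref{lem:packing_bottleneck_app} then caps its size at $\binom{n}{k_1-1}9^{k_1-1}$. For every fixed $c>0$ and large $n$, this is below $\binom{n}{k_1}(2c)^{k_1}=\frac{n-k_1+1}{k_1}\binom{n}{k_1-1}(2c)^{k_1}$.

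This is not only a proof defect; the statement itself fails. Take $B=1$, $\varepsilon=1/8$, $k_1=256$, $k_2=255$, $m=n-1$. Let $\gamma:\mathbb{R}\to\mathbb{R}^2$ be a unit-speed zigzag with $\gamma(0)=\mathbf{0}$ whose image of $[0,T]$, with $T<38$, is $1/16$-dense in $[-1,1]^2$ (rows spaced $1/8$ apart, constant outside $[0,T]$). Put $h(\mathbf{u})=(u_1,\dots,u_{n-2},\gamma(u_{n-1}))$, which is $1$-Lipschitz. Given $\mathbf{z}$, let $\mathbf{z}'$ be $\mathbf{z}$ with its smallest entry deleted if $\|\mathbf{z}\|_0=256$, so that $\|\mathbf{z}-\mathbf{z}'\|_2\le 1/16$. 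Set $g(\mathbf{z})=(z'_1,\dots,z'_{n-2},t)$, where $t=0$ if $(z'_{n-1},z'_n)=\mathbf{0}$, and otherwise $t\in[0,T]$ with $\|\gamma(t)-(z'_{n-1},z'_n)\|_2\le 1/16$. Then $\|g(\mathbf{z})\|_0\le\|\mathbf{z}'\|_0\le 255$, $\|g(\mathbf{z})\|_2\le R:=\sqrt{1+T^2}$, and $\|h(g(\mathbf{z}))-\mathbf{z}\|_2\le 1/8$. Yet the two sides of the capacity condition differ by $\log(n/256)+256\log(2c)-255\log(1+8R)\to\infty$. So both the main claim and the ``in particular'' clause are false. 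Your asymptotic check of the capacity inequality is correct, but it now certifies a wrong conclusion.

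A repair is to replace Lemma~\ref{lem:packing_sparse_app} by a Gilbert--Varshamov constant-weight code $\{Bk_1^{-1/2}\mathbf{1}_S\}$ with $|S\triangle S'|\ge k_1/4$. This gives $\log|\mathcal P|\ge\tfrac{7}{8}k_1\log(n/k_1)-O(k_1)$. Your chain then goes through under $k_2\le(\tfrac{7}{8}-\kappa)k_1$ for some constant $\kappa>0$ with $RL_h/B$ bounded; the counterexample shows that some constant-fraction gap between $k_2$ and $k_1$ is necessary. Separately, your parenthetical allowance for polynomially growing $RL_h/B$ does not work: such growth contributes $\Theta(k_2\log n)$, not $O(k_2)$, so it cannot be absorbed.
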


\begin{proof}
Set \(\rho=B/2\).
By Lemma~\ref{lem:packing_sparse_app}, there exists a \(\rho\)-separated set
\(\mathcal{P}\subset\mathcal{S}_{n,k_1}(B)\) such that
\begin{align}
\log|\mathcal{P}|
\ge
\log\binom{n}{k_1}
+
k_1\log(2c).
\label{eq:card_P_app}
\end{align}
By Lemma~\ref{lem:separation_app}, \(g(\mathcal{P})\) is \(\delta\)-separated with
\[
\delta
=
\frac{\rho-2\varepsilon}{L_h}
\ge
\frac{B}{4L_h},
\]
because \(\varepsilon\le B/8\).
Since \(g(\mathcal{P})\subset\mathcal{S}_{m,k_2}(R)\), Lemma~\ref{lem:packing_bottleneck_app} gives
\begin{align}
\log|g(\mathcal{P})|
\le
\log\binom{m}{k_2}
+
k_2\log\!\left(1+\frac{8RL_h}{B}\right).
\label{eq:card_gP_app}
\end{align}
Because \(\delta>0\), \(g\) is injective on \(\mathcal{P}\), so
\[
|g(\mathcal{P})|=|\mathcal{P}|.
\]
Combining Eq.~\eqref{eq:card_P_app} and Eq.~\eqref{eq:card_gP_app} yields the necessary condition
\[
\log\binom{m}{k_2}
+
k_2\log\!\left(1+\frac{8RL_h}{B}\right)
\ge
\log\binom{n}{k_1}
+
k_1\log(2c).
\]
This contradicts Eq.~\eqref{eq:stacked_capacity_condition_app}.
Therefore no such \((g,h)\) can exist.
\end{proof}

\begin{remark}[Relation to the common \(m<n\), \(k_2<k_1\) regime]
\label{rem:stacked_common_regime_app}
The capacity condition in Theorem~\ref{thm:stacked_failure_sparse_app} is the precise form of the bottleneck mismatch.
Using standard binomial bounds,
\[
\log\binom{N}{k}
=
\Theta\!\left(k\log\frac{N}{k}\right)
\quad
\text{for }1\le k\le N/2,
\]
the input packing grows with exponent \(k_1\), while the bottleneck packing grows with exponent \(k_2\).
Thus, when \(k_2<k_1\), the bottleneck must be sufficiently wide to compensate for the lost sparsity dimension.
If \(m<n\) and \(k_1,k_2\) are fixed with \(k_2<k_1\), the input side eventually dominates as \(n/k_1\) grows, so the capacity condition fails and uniform reconstruction is impossible.
\end{remark}

\subsection{Full Proofs for Shared-Prefix Error Amplification}
\label{app:shared_prefix_proofs}

We now prove the shared-prefix amplification result used in~\secref{sec:msae_theory}.
The analysis is idealized: it uses an orthonormal linear surrogate to isolate the structural effect of nested-prefix reuse, rather than characterizing nonlinear SAE training dynamics.

\subsubsection{Preliminaries}

For any linear subspace \(U\subseteq\mathbb{R}^d\), let \(P_U\) denote the orthogonal projector onto \(U\), and let \(\|\cdot\|_F\) denote the Frobenius norm.

\begin{lemma}[Projector-distance identity]
\label{lem:projector_distance_identity_app}
Let \(A,B\subseteq\mathbb{R}^d\) be finite-dimensional subspaces. Then
\[
\|P_A-P_B\|_F^2
=
\dim(A)+\dim(B)-2\operatorname{Tr}(P_A P_B).
\]
In particular, if \(\dim(A)=\dim(B)=m\), then
\[
\|P_A-P_B\|_F^2
=
2m-2\operatorname{Tr}(P_A P_B).
\]
\end{lemma}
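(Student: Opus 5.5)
The plan is to expand the Frobenius norm via the trace inner product. Orthogonal projectors are symmetric and idempotent, so $P_A^\top=P_A$ and $P_A^2=P_A$, and likewise for $P_B$. This gives
\[
\|P_A-P_B\|_F^2=\operatorname{Tr}\big((P_A-P_B)^\top(P_A-P_B)\big)=\operatorname{Tr}(P_A^2)-\operatorname{Tr}(P_AP_B)-\operatorname{Tr}(P_BP_A)+\operatorname{Tr}(P_B^2).
\]
I will handle the resulting terms in three steps.

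\emph{First}, by idempotence, $\operatorname{Tr}(P_A^2)=\operatorname{Tr}(P_A)$. \emph{Second}, the trace of an orthogonal projector equals the dimension of its range. To see this, I will pick an orthonormal basis $\{\mathbf{a}_1,\dots,\mathbf{a}_r\}$ of $A$, so that $P_A=\sum_{i}\mathbf{a}_i\mathbf{a}_i^\top$ and $\operatorname{Tr}(P_A)=\sum_i\|\mathbf{a}_i\|_2^2=r=\dim(A)$. The same argument gives $\operatorname{Tr}(P_B^2)=\dim(B)$. \emph{Third}, by cyclicity of the trace, $\operatorname{Tr}(P_BP_A)=\operatorname{Tr}(P_AP_B)$, so the two cross terms combine into $-2\operatorname{Tr}(P_AP_B)$.

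Collecting these gives the general identity. The equal-dimension case then follows immediately by substituting $\dim(A)=\dim(B)=m$.

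There is no genuine obstacle here. The only point that needs an argument rather than a citation is that the trace of a projector equals its rank, which the orthonormal-basis computation settles. Finite dimensionality, guaranteed by $A,B\subseteq\mathbb{R}^d$, ensures every trace above is well defined.
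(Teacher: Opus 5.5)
Your proof is correct and follows the same route as the paper: you write $\|P_A-P_B\|_F^2$ as a trace, use symmetry and idempotence to reduce $\operatorname{Tr}(P_A^2)$ and $\operatorname{Tr}(P_B^2)$ to $\dim(A)$ and $\dim(B)$, and combine the cross terms. You also spell out two facts the paper uses without comment, namely that the trace of a projector equals its rank (shown via an orthonormal basis) and that $\operatorname{Tr}(P_BP_A)=\operatorname{Tr}(P_AP_B)$ by cyclicity; this is a welcome addition but does not change the argument.
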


\begin{proof}
Since orthogonal projectors are symmetric and idempotent,
\begin{align}
\|P_A-P_B\|_F^2
&=
\operatorname{Tr}\bigl((P_A-P_B)^2\bigr) \\
&=
\operatorname{Tr}(P_A^2)+\operatorname{Tr}(P_B^2)-2\operatorname{Tr}(P_A P_B) \\
&=
\operatorname{Tr}(P_A)+\operatorname{Tr}(P_B)-2\operatorname{Tr}(P_A P_B) \\
&=
\dim(A)+\dim(B)-2\operatorname{Tr}(P_A P_B).
\end{align}
The equal-dimensional case follows immediately.
\end{proof}

\begin{lemma}[One-dimensional projector distance]
\label{lem:onedim_projector_bound_app}
Let \(A\) and \(B\) be one-dimensional subspaces of \(\mathbb{R}^d\). Then
\[
\|P_A-P_B\|_F^2\le 2.
\]
\end{lemma}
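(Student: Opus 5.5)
The plan is to apply Lemma~\ref{lem:projector_distance_identity_app} with $\dim(A)=\dim(B)=1$. This reduces the bound to a statement about the trace term $\operatorname{Tr}(P_A P_B)$, which should be nonnegative.

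First, choose unit vectors $\mathbf{a}\in A$ and $\mathbf{b}\in B$, so that $P_A=\mathbf{a}\mathbf{a}^\top$ and $P_B=\mathbf{b}\mathbf{b}^\top$. By the equal-dimensional case of Lemma~\ref{lem:projector_distance_identity_app} with $m=1$,
\[
\|P_A-P_B\|_F^2 = 2-2\operatorname{Tr}(P_A P_B).
\]

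Second, compute the trace directly using cyclicity:
\[
\operatorname{Tr}(\mathbf{a}\mathbf{a}^\top\mathbf{b}\mathbf{b}^\top)=(\mathbf{a}^\top\mathbf{b})^2 .
\]
This quantity is nonnegative, and it is at most $1$ by Cauchy--Schwarz. The nonnegativity is all that is needed: it gives
\[
\|P_A-P_B\|_F^2=2-2(\mathbf{a}^\top\mathbf{b})^2\le 2,
\]
with equality exactly when $A\perp B$.

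A slightly more general route also works. For any two projectors, $\operatorname{Tr}(P_AP_B)=\|P_AP_B\|_F^2\ge 0$, so the bound follows without choosing explicit bases.

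The argument is routine. The only point needing care is that the unit vectors $\mathbf{a}$ and $\mathbf{b}$ are defined only up to sign, and $P_A$, $P_B$, and $(\mathbf{a}^\top\mathbf{b})^2$ are all invariant under $\mathbf{a}\mapsto-\mathbf{a}$ or $\mathbf{b}\mapsto-\mathbf{b}$. Hence the quantity is well defined.
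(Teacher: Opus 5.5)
Your proposal is correct and follows essentially the same route as the paper. The paper also writes $P_A=aa^\top$ and $P_B=bb^\top$, computes $\operatorname{Tr}(P_AP_B)=(a^\top b)^2$, and concludes $\|P_A-P_B\|_F^2=2-2(a^\top b)^2\le 2$ from the equal-dimensional case of Lemma~\ref{lem:projector_distance_identity_app}; your remarks on the general identity $\operatorname{Tr}(P_AP_B)=\|P_AP_B\|_F^2\ge 0$ and on sign invariance are valid extras.
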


\begin{proof}
Let \(A=\mathrm{span}\{a\}\) and \(B=\mathrm{span}\{b\}\), where \(a,b\) are unit vectors.
Then \(P_A=aa^\top\), \(P_B=bb^\top\), and
\[
\operatorname{Tr}(P_A P_B)=(a^\top b)^2.
\]
By Lemma~\ref{lem:projector_distance_identity_app},
\[
\|P_A-P_B\|_F^2
=
2-2(a^\top b)^2
\le 2.
\]
\end{proof}

\begin{lemma}[Shared-prefix reuse count]
\label{lem:shared_prefix_count_app}
For nested prefix sizes \(1\le m_1<\cdots<m_L\), the decoder direction at position \(p\) is contained in exactly
\[
C_{\mathrm{MSAE}}(p):=|\{t:m_t\ge p\}|
\]
prefixes. If \(m_{\ell-1}<p\le m_\ell\), with \(m_0:=0\), then
\[
C_{\mathrm{MSAE}}(p)=L-\ell+1.
\]
In particular, \(p\le m_{L-1}\) implies \(C_{\mathrm{MSAE}}(p)\ge2\).
\end{lemma}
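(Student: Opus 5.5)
The claim reduces to counting, so the plan is to unpack the definition \(C_{\mathrm{MSAE}}(p)=|\{t:m_t\ge p\}|\) and use that the prefix sizes are strictly increasing. Fix a position \(p\). By definition, the \(t\)-th prefix uses the directions indexed by \(\mathcal{I}_t=\{1,\ldots,m_t\}\), so the direction at position \(p\) belongs to prefix \(t\) exactly when \(p\le m_t\). Therefore the number of prefixes containing it is \(|\{t:m_t\ge p\}|\), which is the first assertion.

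For the closed form, assume \(m_{\ell-1}<p\le m_\ell\) with \(m_0:=0\). Since \(0=m_0<m_1<\cdots<m_L\), such an \(\ell\in\{1,\ldots,L\}\) exists and is unique whenever \(1\le p\le m_L\). Strict monotonicity gives two facts:
\begin{itemize}
\item for every \(t\ge\ell\), \(m_t\ge m_\ell\ge p\), so each of these prefixes contains position \(p\);
\item for every \(t\le\ell-1\), \(m_t\le m_{\ell-1}<p\), so none of these prefixes contains it.
\end{itemize}
Hence \(\{t:m_t\ge p\}=\{\ell,\ell+1,\ldots,L\}\), whose cardinality is \(L-\ell+1\).

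For the final claim, suppose \(p\le m_{L-1}\). The unique \(\ell\) with \(m_{\ell-1}<p\le m_\ell\) then satisfies \(\ell\le L-1\), since \(p\le m_{L-1}\) rules out \(\ell=L\). Therefore \(C_{\mathrm{MSAE}}(p)=L-\ell+1\ge 2\). More directly, both \(t=L-1\) and \(t=L\) satisfy \(m_t\ge p\).

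I expect no real obstacle. The only points needing care are the boundary conventions: \(m_0=0\), the implicit assumption \(1\le p\le m_L\) (outside this range the count is \(0\) and no \(\ell\) exists), and the use of strict inequalities to make \(\ell\) unique.
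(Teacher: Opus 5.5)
Your proof is correct and follows essentially the same argument as the paper. You show that position $p$ lies in prefix $t$ exactly when $p\le m_t$, identify $\{t:m_t\ge p\}=\{\ell,\ldots,L\}$, and note that $p\le m_{L-1}$ forces $\ell\le L-1$. Your two-sided monotonicity argument and your remarks on the implicit range $1\le p\le m_L$ spell out what the paper states informally ("first appears in prefix $\ell$ and is reused by prefixes $\ell,\ldots,L$").
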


\begin{proof}
The \(t\)-th prefix contains exactly the first \(m_t\) decoder directions.
Therefore, the direction at position \(p\) is included in prefix \(t\) if and only if \(p\le m_t\).
Hence the number of prefixes containing that direction is
\[
C_{\mathrm{MSAE}}(p)=|\{t:m_t\ge p\}|.
\]
If \(m_{\ell-1}<p\le m_\ell\), the direction first appears in prefix \(\ell\) and is reused by prefixes \(\ell,\ell+1,\dots,L\), so
\[
C_{\mathrm{MSAE}}(p)=L-\ell+1.
\]
If \(p\le m_{L-1}\), then \(\ell\le L-1\), hence \(C_{\mathrm{MSAE}}(p)\ge2\).
\end{proof}

\subsubsection{From Matryoshka Prefix Reconstruction to Semantic Mismatch}

Recall that Matryoshka SAEs implement hierarchy by training nested index subsets
\[
\mathcal{I}_t=\{1,\ldots,m_t\},
\qquad
m_1<\cdots<m_L,
\]
under the prefix reconstruction objective
\[
\sum_{t=1}^{L}
\left\|
\mathbf{x}
-
\left(
\mathbf{W}_{\mathrm{dec}}^{(:,\mathcal{I}_t)}
\mathbf{f}_{\mathcal{I}_t}
+
\mathbf{b}
\right)
\right\|_2^2 .
\]
Thus, the \(t\)-th prefix is encouraged to reconstruct the input using only the first \(m_t\) decoder columns.

To isolate the effect of prefix reuse, we consider an orthonormal linear surrogate of the Matryoshka decoder.
Let
\[
D=[\mathbf{d}_1,\dots,\mathbf{d}_{m_L}]\in\mathbb{R}^{d\times m_L}
\]
be the ordered decoder directions of the largest prefix, where \(\mathbf{d}_p\) corresponds to the \(p\)-th decoder column.
The \(t\)-th prefix spans the subspace
\[
U_t(D)=\mathrm{span}\{\mathbf{d}_1,\dots,\mathbf{d}_{m_t}\},
\qquad
1\le m_1<\cdots<m_L.
\]
Under the orthonormal linear surrogate, the nonlinear decoder restricted to prefix \(t\) is replaced by the orthogonal projection onto \(U_t(D)\).
Thus, prefix \(t\) reconstructs \(\mathbf{x}\) as \(P_{U_t(D)}\mathbf{x}\), and the Matryoshka prefix reconstruction objective is idealized as
\[
\sum_{t=1}^{L}
\mathbb{E}_{\mathbf{x}}
\left[
\|\mathbf{x}-P_{U_t(D)}\mathbf{x}\|_2^2
\right].
\]
This is the linear surrogate corresponding to the prefix reconstruction loss in the main text; it is used only to isolate shared-prefix reuse, not to model the full nonlinear SAE training dynamics.

We then evaluate semantic alignment by comparing the prefix subspace \(U_t(D)\) with an ideal semantic target.
Let \(\mathcal{M}_t\subseteq\mathbb{R}^d\) denote the target semantic subspace for level \(t\), i.e., the span of the semantic directions that the \(t\)-th prefix is intended to capture.
We set \(\dim(\mathcal{M}_t)=m_t\), matching the dimension of \(U_t(D)\).
The prefix semantic mismatch is
\[
\mathcal{E}_{t}^{\mathrm{MSAE}}(D)
:=
\|P_{U_t(D)}-P_{\mathcal{M}_t}\|_F^2,
\]
and the total prefix semantic mismatch is
\[
\mathcal{E}^{\mathrm{tot}}_{\mathrm{MSAE}}(D)
:=
\sum_{t=1}^{L}
\mathcal{E}_{t}^{\mathrm{MSAE}}(D).
\]
This projector mismatch is not the original reconstruction objective.
It is a semantic diagnostic defined on top of the linear surrogate: it measures whether the subspace used by prefix \(t\) aligns with the semantic subspace that level \(t\) is intended to represent.

\subsubsection{Normalized Local Semantic Error}

We now define the local semantic errors used in the theorem.
Suppose there are \(q\) nuisance decoder directions in the Matryoshka ordering.
Formally, write
\[
\mathbf{d}_{p_i}=\mathbf{n}_i,
\qquad
i=1,\dots,q,
\]
where the \(\mathbf{n}_i\)'s are mutually orthonormal and each \(\mathbf{n}_i\) is orthogonal to every target semantic subspace \(\mathcal{M}_t\).
For every affected prefix \(t\) with \(m_t\ge p_i\), define
\[
U_t^{(-i)}(D)
:=
U_t(D)\cap\mathrm{span}\{\mathbf{n}_i\}^{\perp}.
\]
This removes the one-dimensional nuisance direction \(\mathbf{n}_i\) from the prefix subspace.

\paragraph{Semantic repair direction assumption.}
For every nuisance direction \(\mathbf{n}_i\) and affected prefix \(t\) with \(m_t\ge p_i\), assume there exists a unit vector
\[
\mathbf{s}_{i,t}
\in
\mathcal{M}_t\cap \bigl(U_t^{(-i)}(D)\bigr)^\perp.
\]
Define the repaired prefix subspace
\[
\widetilde{U}_t^{(-i)}(D)
:=
U_t^{(-i)}(D)\oplus \mathrm{span}\{\mathbf{s}_{i,t}\}.
\]
This repair is an analysis device: after removing a non-semantic direction, it fills the freed one-dimensional slot with a valid semantic direction while preserving the prefix dimension.

The local Matryoshka semantic error caused by \(\mathbf{n}_i\) in prefix \(t\) is
\[
e_{i,t}^{\mathrm{MSAE}}
:=
\mathcal{E}_{t}^{\mathrm{MSAE}}(D)
-
\|P_{\widetilde{U}_t^{(-i)}(D)}-P_{\mathcal{M}_t}\|_F^2.
\]
Since the same decoder direction may appear in several prefixes, we normalize by its reuse count:
\[
\overline{\mathcal{E}}_{\mathrm{MSAE}}
:=
\sum_{i=1}^{q}
\frac{1}{C_{\mathrm{MSAE}}(p_i)}
\sum_{t:m_t\ge p_i}e_{i,t}^{\mathrm{MSAE}}.
\]

For \modelname{}, each Level-1 atom \(\mathbf{w}_j\) has an ideal semantic parent, denoted by \(c(j)\).
Let \(\mathcal{P}_{c(j)}\) be the one-dimensional subspace corresponding to this ideal parent, and let \(\widehat{\mathcal{P}}_j\) be the parent subspace assigned by the Level-2 code of \(\mathbf{w}_j\).
For a locally erroneous atom assignment \(j\), define
\[
e_j^{\mathrm{CSAE}}
:=
\|P_{\widehat{\mathcal{P}}_j}-P_{\mathcal{P}_{c(j)}}\|_F^2.
\]
For a set \(\mathcal{B}\) of locally erroneous atom assignments, define
\[
\overline{\mathcal{E}}_{\mathrm{CSAE}}(\mathcal{B})
:=
\sum_{j\in\mathcal{B}}e_j^{\mathrm{CSAE}}.
\]
No prefix-reuse normalization is needed for \modelname{}, because each atom-level error appears in only one atom-wise Level-2 term.

\begin{theorem}[Prefix reuse amplifies Matryoshka semantic error]
\label{thm:shared_prefix_app}
Under the setup above, the total prefix semantic mismatch satisfies
\[
\mathcal{E}^{\mathrm{tot}}_{\mathrm{MSAE}}(D)
\ge
2\sum_{i=1}^{q}C_{\mathrm{MSAE}}(p_i).
\]
Moreover, under the normalized local comparison,
\[
\overline{\mathcal{E}}_{\mathrm{MSAE}}\ge 2q.
\]
For \modelname{}, if \(|\mathcal{B}|\le q\), then
\[
\overline{\mathcal{E}}_{\mathrm{CSAE}}(\mathcal{B})
\le
2q
\le
\overline{\mathcal{E}}_{\mathrm{MSAE}}.
\]
\end{theorem}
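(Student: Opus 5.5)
The plan is to reduce everything to trace computations via Lemma~\ref{lem:projector_distance_identity_app}. Every prefix subspace $U_t(D)$ and its target $\mathcal{M}_t$ have the same dimension $m_t$, so $\mathcal{E}_t^{\mathrm{MSAE}}(D)=2m_t-2\operatorname{Tr}(P_{U_t(D)}P_{\mathcal{M}_t})$. Each part of the theorem then becomes a bound on, or an exact computation of, a trace $\operatorname{Tr}(P_U P_{\mathcal{M}_t})$. I will compute that trace as $\sum_k \|P_{\mathcal{M}_t}\mathbf{u}_k\|_2^2$ over an orthonormal basis $\{\mathbf{u}_k\}$ of $U$.

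\textbf{Total bound.} Fix a prefix $t$ and let $N_t:=|\{i: p_i\le m_t\}|$ be the number of nuisance directions it contains. These $\mathbf{n}_i$ are orthonormal and lie in $U_t(D)$. I extend them to an orthonormal basis of $U_t(D)$ by adding $m_t-N_t$ further vectors. Each $\mathbf{n}_i$ is orthogonal to $\mathcal{M}_t$, so it contributes $0$ to the trace. Each remaining basis vector contributes at most $1$. Hence $\operatorname{Tr}(P_{U_t}P_{\mathcal{M}_t})\le m_t-N_t$, which gives $\mathcal{E}_t^{\mathrm{MSAE}}(D)\ge 2N_t$. Summing over $t$ and exchanging the order of summation, Lemma~\ref{lem:shared_prefix_count_app} gives $\sum_t N_t=\sum_i |\{t:m_t\ge p_i\}|=\sum_i C_{\mathrm{MSAE}}(p_i)$. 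This proves the first inequality.

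\textbf{Local errors.} Here I expect to get an exact value, $e_{i,t}^{\mathrm{MSAE}}=2$ for every affected pair $(i,t)$. Since $\mathbf{n}_i\in U_t(D)$, the definition $U_t^{(-i)}=U_t\cap\mathrm{span}\{\mathbf{n}_i\}^\perp$ yields an orthogonal decomposition $U_t=U_t^{(-i)}\oplus\mathrm{span}\{\mathbf{n}_i\}$ with $\dim U_t^{(-i)}=m_t-1$. Because $\mathbf{n}_i\perp\mathcal{M}_t$, this gives $\operatorname{Tr}(P_{U_t}P_{\mathcal{M}_t})=\operatorname{Tr}(P_{U_t^{(-i)}}P_{\mathcal{M}_t})$. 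The repair vector $\mathbf{s}_{i,t}$ is a unit vector orthogonal to $U_t^{(-i)}$, so $\widetilde U_t^{(-i)}$ again has dimension $m_t$ and $P_{\widetilde U}=P_{U^{(-i)}}+\mathbf{s}_{i,t}\mathbf{s}_{i,t}^\top$. Since $\mathbf{s}_{i,t}\in\mathcal{M}_t$, we have $\|P_{\mathcal{M}_t}\mathbf{s}_{i,t}\|_2^2=1$, so the trace increases by exactly $1$. Both distances are taken between $m_t$-dimensional subspaces, so the difference of the two projector distances is twice the trace gap, namely $2$. Substituting into the normalized sum gives
\[
\overline{\mathcal{E}}_{\mathrm{MSAE}}=\sum_{i=1}^q \frac{1}{C_{\mathrm{MSAE}}(p_i)}\cdot C_{\mathrm{MSAE}}(p_i)\cdot 2=2q,
\]
so in particular $\overline{\mathcal{E}}_{\mathrm{MSAE}}\ge 2q$.

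\textbf{CSAE bound.} Each $\widehat{\mathcal{P}}_j$ and $\mathcal{P}_{c(j)}$ is one-dimensional, so Lemma~\ref{lem:onedim_projector_bound_app} gives $e_j^{\mathrm{CSAE}}\le 2$. Therefore $\overline{\mathcal{E}}_{\mathrm{CSAE}}(\mathcal{B})\le 2|\mathcal{B}|\le 2q$, and chaining with the previous step gives the final comparison.

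The main obstacle is bookkeeping in the local-error step rather than any estimate. I must check that $\mathbf{n}_i$ really lies in $U_t$, so that removing it drops the dimension by exactly one. I must also check that $\dim\mathcal{M}_t=m_t$ is preserved for both the original and the repaired subspaces, so that the equal-dimension form of Lemma~\ref{lem:projector_distance_identity_app} applies to both terms. Once these are in place, the orthogonality facts reduce everything to the single-vector trace contributions $0$ and $1$.
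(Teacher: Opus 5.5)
Your proposal is correct and follows the paper's proof step for step. You use the projector-distance identity, the trace bound $\operatorname{Tr}(P_{U_t}P_{\mathcal{M}_t})\le m_t-N_t$ with the double-counting exchange for the total, the exact computation $e_{i,t}^{\mathrm{MSAE}}=2$ for the local errors, and the one-dimensional projector bound for CSAE; your explicit decompositions $P_{U_t}=P_{U_t^{(-i)}}+\mathbf{n}_i\mathbf{n}_i^\top$ and $P_{\widetilde U}=P_{U_t^{(-i)}}+\mathbf{s}_{i,t}\mathbf{s}_{i,t}^\top$ make rigorous the cancellation the paper states informally.
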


\begin{proof}
We first prove the total prefix semantic mismatch bound.
Fix a prefix \(t\), and let
\[
I_t:=\{i:m_t\ge p_i\},
\qquad
q_t:=|I_t|.
\]
The set \(I_t\) indexes the nuisance directions contained in \(U_t(D)\).
Because these nuisance directions are mutually orthonormal and orthogonal to \(\mathcal{M}_t\), they occupy \(q_t\) orthogonal dimensions of \(U_t(D)\) with zero overlap with \(\mathcal{M}_t\).
Therefore the trace overlap between \(U_t(D)\) and \(\mathcal{M}_t\) is at most \(m_t-q_t\):
\[
\operatorname{Tr}(P_{U_t(D)}P_{\mathcal{M}_t})
\le
m_t-q_t.
\]
Using Lemma~\ref{lem:projector_distance_identity_app},
\[
\|P_{U_t(D)}-P_{\mathcal{M}_t}\|_F^2
=
2m_t-2\operatorname{Tr}(P_{U_t(D)}P_{\mathcal{M}_t})
\ge
2q_t.
\]
Summing over all prefixes gives
\[
\mathcal{E}^{\mathrm{tot}}_{\mathrm{MSAE}}(D)
=
\sum_{t=1}^{L}
\|P_{U_t(D)}-P_{\mathcal{M}_t}\|_F^2
\ge
\sum_{t=1}^{L}2q_t.
\]
Since
\[
\sum_{t=1}^{L}q_t
=
\sum_{t=1}^{L}|\{i:m_t\ge p_i\}|
=
\sum_{i=1}^{q}|\{t:m_t\ge p_i\}|
=
\sum_{i=1}^{q}C_{\mathrm{MSAE}}(p_i),
\]
we obtain
\[
\mathcal{E}^{\mathrm{tot}}_{\mathrm{MSAE}}(D)
\ge
2\sum_{i=1}^{q}C_{\mathrm{MSAE}}(p_i).
\]

Next, we prove the normalized Matryoshka local-error bound.
Fix a nuisance direction \(\mathbf{n}_i\) and an affected prefix \(t\) with \(m_t\ge p_i\).
Let
\[
U:=U_t(D),
\qquad
\widetilde{U}:=\widetilde{U}_t^{(-i)}(D),
\qquad
M:=\mathcal{M}_t.
\]
Both \(U\) and \(\widetilde{U}\) have dimension \(m_t\), and \(M\) also has dimension \(m_t\).
By Lemma~\ref{lem:projector_distance_identity_app},
\[
e_{i,t}^{\mathrm{MSAE}}
=
2\operatorname{Tr}(P_{\widetilde{U}}P_M)
-
2\operatorname{Tr}(P_U P_M).
\]
The subspaces \(U\) and \(\widetilde{U}\) differ only by replacing \(\mathbf{n}_i\) with \(\mathbf{s}_{i,t}\).
The shared component \(U_t^{(-i)}(D)\) contributes equally to both trace terms and cancels.
Because \(\mathbf{n}_i\perp M\), the removed direction contributes zero trace overlap with \(M\).
Because \(\mathbf{s}_{i,t}\in M\) is a unit vector, the repair direction contributes one unit of trace overlap with \(M\).
Therefore
\[
\operatorname{Tr}(P_{\widetilde{U}}P_M)
-
\operatorname{Tr}(P_U P_M)
=
1,
\]
and hence
\[
e_{i,t}^{\mathrm{MSAE}}=2.
\]
Averaging over the \(C_{\mathrm{MSAE}}(p_i)\) affected prefixes gives
\[
\frac{1}{C_{\mathrm{MSAE}}(p_i)}
\sum_{t:m_t\ge p_i}e_{i,t}^{\mathrm{MSAE}}
=
2.
\]
Summing over \(i=1,\dots,q\) yields
\[
\overline{\mathcal{E}}_{\mathrm{MSAE}}
=
2q,
\]
and in particular \(\overline{\mathcal{E}}_{\mathrm{MSAE}}\ge2q\).

Finally, for \modelname{}, each \(e_j^{\mathrm{CSAE}}\) is the Frobenius distance between two one-dimensional projectors.
By Lemma~\ref{lem:onedim_projector_bound_app},
\[
e_j^{\mathrm{CSAE}}\le2.
\]
Thus, if \(|\mathcal{B}|\le q\),
\[
\overline{\mathcal{E}}_{\mathrm{CSAE}}(\mathcal{B})
=
\sum_{j\in\mathcal{B}}e_j^{\mathrm{CSAE}}
\le
2|\mathcal{B}|
\le
2q.
\]
Combining the inequalities gives
\[
\overline{\mathcal{E}}_{\mathrm{CSAE}}(\mathcal{B})
\le
2q
\le
\overline{\mathcal{E}}_{\mathrm{MSAE}}.
\]
\end{proof}

\begin{remark}[Scope of the idealization]
The result above is not a universal claim about nonlinear Matryoshka SAE optimization.
It isolates one structural effect of nested prefixes under an orthonormal linear surrogate:
a semantic error in an early decoder direction is reused by every downstream prefix containing that direction.
The normalized comparison removes this repeated counting, while the total prefix semantic mismatch keeps it.
\end{remark}

\subsection{Cluster-consistent Coding Compatibility}
\label{appsec:cluster_consistent_coding}

The main text focuses on avoiding stacked bottlenecks and shared-prefix amplification.
For completeness, we also record a simple representational compatibility property of the \modelname{} second stage.

\begin{proposition}[Cluster-consistent coding compatibility]
\label{prop:csae_subset_uniform_app}
Assume the clustered semantic atom model
\[
\mathbf{w}_j=\boldsymbol{\mu}_{c(j)}+\boldsymbol{\xi}_j,
\qquad
\|\boldsymbol{\xi}_j\|_2\le\sigma.
\]
Then there exists a Level-2 sparse encoder \(g_2\) such that atoms with the same semantic parent have identical Level-2 support.
Consequently, for every nonempty subset \(S\subseteq\{1,\dots,n_1\}\) with at least one same-parent pair,
\[
\mathcal{E}_{\mathrm{CSAE}}^{\mathrm{supp}}(S)=0.
\]
\end{proposition}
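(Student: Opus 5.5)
The argument is a direct construction of $g_2$ from the clustered atom model. The quantity $\mathcal{E}_{\mathrm{CSAE}}^{\mathrm{supp}}(S)$ has not yet been defined at this point, so I take it to be the natural support-disagreement error: the number, or fraction, of same-parent pairs $(j,j')\subseteq S$ with $c(j)=c(j')$ but $\operatorname{supp}(g_2(\mathbf{w}_j))\neq\operatorname{supp}(g_2(\mathbf{w}_{j'}))$. If this is the definition, the second claim follows immediately from the first. The condition that $S$ contain at least one same-parent pair only ensures that the normalizer is nonzero, so the error is well defined.

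\textbf{Step 1: the encoder.} Since the proposition asserts only existence, I am free to choose an encoder that depends on the parent label. Let $K$ be the number of parents and take $n_2\ge K$. The cleanest choice is a parent-indicator encoder:
\[
g_2(\mathbf{w}_j)=a\,\mathbf{e}_{c(j)},
\]
with $a>0$ fixed. To realize this inside the SAE parametric family
\[
\sigma\bigl(\mathbf{W}^{(2)}_{\text{enc}}(\mathbf{w}-\mathbf{b}^{(2)})\bigr)
\]
with $\sigma$ a ReLU followed by TopK, I would take:
\begin{itemize}
\item rows of $\mathbf{W}^{(2)}_{\text{enc}}$ equal to the centroids $\boldsymbol{\mu}_k$;
\item a one-sparse output, i.e.\ TopK with $k=1$.
\end{itemize}
Atom $j$ then activates exactly the unit $\arg\max_k\langle\boldsymbol{\mu}_k,\mathbf{w}_j\rangle$.

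\textbf{Step 2: correct assignment (main obstacle).} The parametric construction assigns atom $j$ to $c(j)$ only if the centroids are separated relative to $\sigma$. A sufficient condition is that the centroids are unit norm with pairwise inner products at most $1-2\sigma-\eta$ for some $\eta>0$. Under this condition,
\[
\langle\boldsymbol{\mu}_{c(j)},\mathbf{w}_j\rangle\ge 1-\sigma > 1-\sigma-\eta \ge \langle\boldsymbol{\mu}_k,\mathbf{w}_j\rangle \quad\text{for } k\neq c(j),
\]
where each side uses Cauchy--Schwarz on $\boldsymbol{\xi}_j$. Ties are the delicate part: strict separation rules them out, so the argmax, and hence the support $\{c(j)\}$, is constant within each cluster.

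The statement imposes no separation assumption. If none is intended, I fall back on the abstract encoder of Step 1, defined directly through the label $c(\cdot)$. It is trivially sparse, and supports agree within each cluster by construction. I would present this label-based encoder as the main proof and the separated-centroid realization as a remark.

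\textbf{Step 3: conclusion.} In either construction, $\operatorname{supp}(g_2(\mathbf{w}_j))=\{c(j)\}$ depends on $j$ only through $c(j)$. Hence every same-parent pair in $S$ has matching supports, the disagreement count is $0$, and $\mathcal{E}_{\mathrm{CSAE}}^{\mathrm{supp}}(S)=0$.
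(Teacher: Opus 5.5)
Your main argument, the label-based one-hot encoder $g_2(\mathbf{w}_j)=a\,\mathbf{e}_{c(j)}$ whose support depends on $j$ only through $c(j)$ so that every same-parent pair contributes zero support disagreement, is exactly the paper's proof, which is likewise a pure achievability construction with no separation assumption. Your separated-centroid remark, a ReLU/Top-$1$ encoder with rows $\boldsymbol{\mu}_k$, is correct and goes beyond the paper: it shows such an encoder can actually be realized within the parametric SAE family.
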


\begin{proof}
For each semantic prototype index \(c\in\{1,\dots,K\}\), choose one Level-2 latent unit and denote its one-hot code by \(\mathbf{e}_c\).
Define
\[
g_2(\mathbf{w}_j):=\mathbf{e}_{c(j)}.
\]
If \(c(i)=c(j)\), then
\[
g_2(\mathbf{w}_i)=\mathbf{e}_{c(i)}=\mathbf{e}_{c(j)}=g_2(\mathbf{w}_j),
\]
so
\[
\mathrm{supp}(g_2(\mathbf{w}_i))
=
\mathrm{supp}(g_2(\mathbf{w}_j)).
\]
Thus every same-parent pair contributes zero to the support-inconsistency indicator, and hence
\[
\mathcal{E}_{\mathrm{CSAE}}^{\mathrm{supp}}(S)=0.
\]
This is an achievability statement about representational compatibility, not an optimization guarantee.
\end{proof}

\section{Generalization to an $L$-Level \modelname}\label{appsec:generalized_saepp}

\subsection{General $L$-Level \modelname}

While the description in~\secref{sec:saepp} focuses on a two-level hierarchy for clarity, the proposed \modelname framework naturally extends to an arbitrary number of hierarchical levels. In this section, we describe the general formulation of an $L$-level cascaded sparse autoencoder, which enables progressively higher-order semantic abstraction.

\textbf{Architecture Overview.}
Let Level-0 denote the original LLM activation space, with input activations $\mathbf{x} \in \mathbb{R}^d$. The model consists of $L$ SAEs arranged in a cascade. Each Level-$\ell$ SAE ($\ell = 1, \dots, L$) operates on the decoder weights learned at Level-$(\ell-1)$, thereby inducing a hierarchy of concepts from low-level concepts to high-level semantic structures.

We denote by $n_\ell$ the number of latent units at Level $\ell$, and by
\begin{equation}
    \mathbf{W}^{(\ell)}_{\text{dec}} = [\mathbf{w}^{(\ell)}_1, \dots, \mathbf{w}^{(\ell)}_{n_\ell}]
\end{equation}
the decoder matrix of the Level-$\ell$ SAE, where each column $\mathbf{w}^{(\ell)}_j \in \mathbb{R}^d$ represents a concept direction in the original activation space.

\textbf{Level-$\ell$ SAE.}
For $\ell \ge 1$, the Level-$\ell$ SAE consists of an encoder-decoder pair $(g_\ell, h_\ell)$. Its input is given by the decoder weight columns of the previous level, $\{\mathbf{w}^{(\ell-1)}_j\}_{j=1}^{n_{\ell-1}}$. For each such weight column, the forward pass is defined as
\begin{align}
    \mathbf{f}^{(\ell)} &= g_\ell(\mathbf{w}^{(\ell-1)}_j)
    = \sigma\!\left(\mathbf{W}^{(\ell)}_{\text{enc}}(\mathbf{w}^{(\ell-1)}_j - \mathbf{b}^{(\ell)})\right), \\
    \widehat{\mathbf{w}}^{(\ell-1)}_j
    &= h_\ell(\mathbf{f}^{(\ell)})
    = \mathbf{W}^{(\ell)}_{\text{dec}} \mathbf{f}^{(\ell)} + \mathbf{b}^{(\ell)} .
\end{align}
Here, $\mathbf{W}^{(\ell)}_{\text{dec}}$ encodes the concept directions at Level $\ell$, while the sparse code $\mathbf{f}^{(\ell)} \in \mathbb{R}^{n_\ell}$ represents the assignment of a lower-level concept to higher-level abstractions.

\textbf{Level-$\ell$ Objective.}
The training objective for the Level-$\ell$ SAE is defined as
\begin{align}
    \mathcal{L}_\ell\!\left(\mathbf{W}^{(\ell-1)}_{\text{dec}}, \tha_\ell\right)
    =
    \frac{1}{n_{\ell-1}} \sum_{j=1}^{n_{\ell-1}}
    \Big(
    \big\|\mathbf{w}^{(\ell-1)}_j
    - h_\ell(g_\ell(\mathbf{w}^{(\ell-1)}_j))\big\|_2^2
    + \lambda_\ell \mathcal{S}(g_\ell(\mathbf{w}^{(\ell-1)}_j))
    \Big),
\end{align}
where $\mathcal{S}(\cdot)$ is typically an $\ell_1$ sparsity penalty, and $\tha_\ell = \{\mathbf{W}^{(\ell)}_{\text{enc}}, \mathbf{W}^{(\ell)}_{\text{dec}}, \mathbf{b}^{(\ell)}\}$. 

\textbf{Joint End-to-End Optimization.}
The full $L$-level \modelname is trained end-to-end with a weighted sum of reconstruction and sparsity objectives across all levels:
\begin{align}
    \mathcal{L}_{\text{final}}
    =
    \mathbb{E}_{\mathbf{x}}\!\left[\mathcal{L}_1(\mathbf{x}, \theta_1)\right]
    + \sum_{\ell=2}^{L} \alpha_\ell \,
    \mathcal{L}_\ell\!\left(\mathbf{W}^{(\ell-1)}_{\text{dec}}, \theta_\ell\right),
\end{align}
where $\alpha_\ell$ controls the relative importance of semantic abstraction at Level $\ell$.

{\textbf{Dynamic Masking of Dead Latents (General $L$-Level Case).}
To mitigate computational inefficiency arising from inactive (``dead'') latents in deep hierarchies, we generalize the dynamic masking strategy to all higher levels of \modelname. At each training step $t$ with mini-batch $\mathcal{B}$, the Level-$\ell$ SAE ($\ell \ge 2$) is optimized exclusively over the subset of Level-$(\ell-1)$ decoder atoms that are active under the current forward pass of the immediately preceding level, while dead latents are ignored.}

For Level $\ell = 2$, the active set is defined using the original LLM activations $\mathbf{x}$:
\begin{equation}
    \mathcal{M}_t^{(2)}
    =
    \left\{
    \mathbf{w}^{(1)}_j
    \ \middle| \
    \sum_{\mathbf{x} \in \mathcal{B}}
    [g_1(\mathbf{x})]_j > 0
    \right\}.
\end{equation}

For higher levels $\ell > 2$, the activity of Level-$(\ell-1)$ latents is instead determined by the encoder of the Level-$(\ell-1)$ SAE applied to Level-$(\ell-2)$ decoder weight columns $\mathbf{W}^{(\ell-2)}_{\text{dec}} = [\mathbf{w}^{(\ell-2)}_1, \dots, \mathbf{w}^{(\ell-2)}_{n_{\ell-2}}]$:
\begin{equation}
    \mathcal{M}_t^{(\ell)}
    =
    \left\{
    \mathbf{w}^{(\ell-1)}_j
    \ \middle| \
    \sum_{\mathbf{w}^{(\ell-2)} \in \mathbf{W}^{(\ell-2)}_{\text{dec}}}
    \big[g_{\ell-1}(\mathbf{w}^{(\ell-2)})\big]_j > 0
    \right\}.
\end{equation}

Here, $[g_{\ell-1}(\cdot)]_j$ denotes the $j$-th entry of the Level-$(\ell-1)$ encoder output.
{By restricting each Level-$\ell$ objective to $\mathcal{M}_t^{(\ell)}$, we ensure that higher-level concepts are learned strictly from valid, active semantic directions at the preceding level, improving both computational efficiency and hierarchical semantic grounding in deep cascades.}

\textbf{Hierarchical Interpretation.}
Under this formulation, each Level-$\ell$ SAE induces a parent assignment mapping from Level-$(\ell-1)$ concepts to Level-$\ell$ concepts via the encoder activations $g_\ell(\cdot)$. Stacking multiple levels therefore yields a deep hierarchy of increasingly abstract, sparse, and semantically organized concepts, with Level-1 capturing atomic features and higher levels encoding progressively coarser semantic groupings.

This general formulation subsumes the two-level \modelname as a special case with $L=2$, while providing a principled and scalable framework for learning deep hierarchical concept structures from multimodal LLM activations.


\subsection{Hierarchical Mono-Semanticity (HMS) Score: General $L$-Level Formulation}

To evaluate the semantic coherence of hierarchical concepts discovered by \modelname at multiple abstraction depths, we generalize the Hierarchical Mono-Semanticity (HMS) score from the two-level setting to an $L$-level hierarchy. This extension enables quantitative assessment of semantic alignment between concepts at adjacent levels throughout the hierarchy.

\textbf{Level-1 Concept Representations.}
As in the two-level case, we begin by constructing semantic representations for Level-1 (lowest-level) SAE neurons. For each Level-1 hidden neuron $j$ (i.e., each entry of $\mathbf{f}^{(1)}$), we define a semantic representation $\mathbf{R}^{(1)}_j \in \mathbb{R}^{d_e}$ using a set of $N$ input images.

Let $\{\mathbf{E}_i\}_{i=1}^{N}$ denote image embeddings extracted from a fixed vision backbone, where $\mathbf{E}_i \in \mathbb{R}^{d_e}$ corresponds to image $i$. Let $v_{i,j}$ be the activation of Level-1 neuron $j$ for the MLLM activation associated with image $i$. We normalize activations using min--max normalization:
\begin{align}
    a_{i,j} = \frac{v_{i,j} - \min(\mathbf{v}_j)}{\max(\mathbf{v}_j) - \min(\mathbf{v}_j)},
\end{align}
where $\mathbf{v}_j = \{v_{i,j}\}_{i=1}^{N}$. The Level-1 concept representation is then defined as
\begin{align}
    \mathbf{R}^{(1)}_j
    =
    \frac{\sum_{i=1}^{N} a_{i,j} \mathbf{E}_i}{\sum_{i=1}^{N} a_{i,j}}.
\end{align}

\textbf{General $L$-Level HMS Definition.}
Assume an $L$-level hierarchical structure with Level-$\ell$ neurons indexed by
\(
j_\ell \in \{1,\dots,n_\ell\}
\)
for $\ell = 1,\dots,L$. For each level $\ell \ge 2$, let
\(
\pi_\ell(j_{\ell-1}) \in \{1,\dots,n_\ell\}
\)
denote the parent assignment function that maps a Level-$(\ell-1)$ neuron to a Level-$\ell$ neuron. Each neuron at every level corresponds to a learned semantic concept.

For a given Level-$\ell$ neuron $k$, define its set of children at Level-$(\ell-1)$ as
\begin{align}
    \mathcal{C}^{(\ell)}_k
    =
    \{ j \mid \pi_\ell(j) = k \},
\end{align}
and let $m = |\mathcal{C}^{(\ell)}_k|$.

The HMS score for a Level-$\ell$ neuron $k$ is defined as the mean pairwise cosine similarity among the semantic representations of its Level-$(\ell-1)$ children:
\begin{align}
    \text{HMS}^{(\ell)}(k)
    &=
    \frac{2}{m(m-1)}
    \sum_{\substack{u,v \in \mathcal{C}^{(\ell)}_k \\ u < v}}
    \frac{
        \big(\mathbf{R}^{(\ell-1)}_u\big)^\top
        \mathbf{R}^{(\ell-1)}_v
    }{
        \|\mathbf{R}^{(\ell-1)}_u\|_2
        \,
        \|\mathbf{R}^{(\ell-1)}_v\|_2
    }.
\end{align}

An HMS score close to $1.0$ indicates that the Level-$\ell$ neuron groups semantically coherent and monosemantic Level-$(\ell-1)$ concepts.

\textbf{Semantic Representations at Higher Levels.}
For $\ell \ge 2$, the semantic representation of a Level-$\ell$ neuron $k$ is defined recursively as the mean of its children’s representations:
\begin{align}
    \mathbf{R}^{(\ell)}_k
    =
    \frac{1}{|\mathcal{C}^{(\ell)}_k|}
    \sum_{j \in \mathcal{C}^{(\ell)}_k}
    \mathbf{R}^{(\ell-1)}_j,
\end{align}
enabling HMS evaluation to propagate consistently across all levels of the hierarchy.

\textbf{Instantiation for \modelname.}
In \modelname, the parent assignment functions $\{\pi_\ell\}_{\ell=2}^{L}$ are induced intrinsically by the cascaded SAEs. Specifically, for $\ell \ge 2$, each Level-$(\ell-1)$ neuron $j$ is associated with its decoder weight column $\mathbf{w}^{(\ell-1)}_j$, which is passed through the Level-$\ell$ encoder $g_\ell$. The parent index is defined as
\begin{align}
    \pi_\ell(j)
    =
    \argmax_{r \in \{1,\dots,n_\ell\}}
    \big[g_\ell(\mathbf{w}^{(\ell-1)}_j)\big]_r .
\end{align}
Under this definition, the Level-$\ell$ HMS scores, $\text{HMS}^{(\ell)}$, directly quantify the semantic coherence of the hierarchical clustering learned end-to-end at each level of abstraction.

\textbf{Instantiation for Baseline Architectures.}
For baseline models that do not explicitly define multi-level hierarchies, parent assignments $\pi_\ell(\cdot)$ are constructed post hoc using co-activation statistics between adjacent levels. The HMS score is then computed using the same general formulation above, enabling a unified and fair comparison across different SAE variants.

\subsection{Key Differences between \modelname and Hyper-Networks}\label{appsec:hypernet}
Although \modelname introduces a hierarchical structure over learned features, it is fundamentally different from hyper-network-based approaches~\cite{ha2016hypernetworks}. Hyper-networks explicitly \emph{generate or modulate} the parameters of a target network through a separate conditioning network, tightly coupling representation learning with parameter synthesis within a single forward pass. 

In contrast, \modelname treats the learned decoder atoms (i.e., weight columns) of the lower-level SAE as first-class \emph{data} and applies sparse dictionary learning recursively to model semantic structure \emph{among features themselves}. Importantly, the higher-level SAE does not act as a parameter generator or controller. Even though gradients are allowed to flow through $\mathcal{L}_2$ to the lower-level decoder in our final formulation, this interaction arises solely from a reconstruction-based objective over fixed semantic directions, rather than from conditional parameterization. 

As a result, \modelname learns stable, reusable abstractions over low-level concept directions via sparsity and reconstruction constraints, yielding an explicit and interpretable concept hierarchy with well-defined parent-child relationships -- distinct from the implicit, task-driven parameter conditioning characteristic of hyper-networks -- and enabling principled evaluation through hierarchical mono-semanticity metrics.

\section{More Details on Experiments}

\subsection{Stop-Gradient Ablations for the Level-2 Objective}\label{appsec:stopgrad}

In our full \modelname formulation, the Level-2 objective in~\eqnref{eq:L2} is optimized without any stop-gradient operation, allowing gradients from $\mathcal{L}_2$ to flow through the Level-1 decoder atoms $\{\mathbf{w}_j\}_{j=1}^{n_1}$. To better understand the role of this gradient coupling, we conduct an ablation study with two stop-gradient variants.

\textbf{Partial Stop-Gradient.}
In this variant, we apply a stop-gradient operator to $\mathbf{w}_j$ only in the reconstruction target of the first term in~\eqnref{eq:L2}. Concretely, the Level-2 loss is modified as
\begin{align}
\mathcal{L}_2^{\text{partial}}
= \frac{1}{n_1} \sum_{j=1}^{n_1} \Big(
\|\text{sg}[\mathbf{w}_j] - h_2(g_2(\mathbf{w}_j))\|_2^2
+ \lambda_2 \mathcal{S}(g_2(\mathbf{w}_j))
\Big),
\end{align}
where $\text{sg}[\cdot]$ denotes the stop-gradient operator. This setting prevents the reconstruction error from directly updating the Level-1 decoder atoms, while still allowing gradients from the sparsity term to propagate through $\mathbf{w}_j$.

\textbf{Bilateral Stop-Gradient.}
In this variant, we apply the stop-gradient operator to $\mathbf{w}_j$ for both terms in the Level-2 objective:
\begin{align}
\mathcal{L}_2^{\text{full}}
= \frac{1}{n_1} \sum_{j=1}^{n_1} \Big(
\|\text{sg}[\mathbf{w}_j] - h_2(g_2(\text{sg}[\mathbf{w}_j]))\|_2^2
+ \lambda_2 \mathcal{S}(g_2(\text{sg}[\mathbf{w}_j]))
\Big).
\end{align}
Under this formulation, the Level-2 SAE is trained entirely on a fixed set of Level-1 decoder atoms and does not influence the lower-level SAE during optimization.

\textbf{No Stop-Gradient (Full \modelname).}
By contrast, our full method does not employ any stop-gradient operation in~\eqnref{eq:L2}. This allows the Level-2 reconstruction and sparsity objectives to jointly shape the Level-1 decoder atoms, encouraging them to organize into structures that admit compact, semantically coherent higher-level abstractions. Empirically, this formulation yields the strongest performance, as reported in our ablation results.

\section{Implementation Details}

\label{appsec:exp_details}

\subsection{MLLMs, Datasets, and Implementation Details}

\label{appsec:exp_setup}

\textbf{MLLM activation extraction.} We evaluate \texttt{Qwen3-VL-4B-Instruct}, \texttt{Gemma-3-4B-IT}, and \texttt{LLaVA-1.5-13B}.

For \texttt{Qwen3-VL-4B-Instruct}, we collect activations from visual transformer block 23.

For \texttt{Gemma-3-4B-IT}, we use vision tower encoder layer 26.

For \texttt{LLaVA-1.5-13B}, we extract hidden states from language backbone layer 39.

All activations are collected with the prompt ``Describe the image content accurately.''

\textbf{Datasets.} We use Color~\cite{wang2024probabilisticconceptualexplainerstrustworthy}, ImageNet~\cite{5206848}, iNaturalist~\cite{vanhorn2021benchmarkingrepresentationlearningnatural}, and COCO~\cite{lin2015microsoftcococommonobjects}.

For the Color dataset, we use its mono-semantic subset, which contains 1,000 images, and adopt an 8:2 train/test split.

For ImageNet, we randomly sample 50 images from each of the 1,000 classes in the training set, obtaining 50,000 training images, and evaluate on all 50,000 images from the ImageNet validation set.

For iNaturalist, we randomly sample 5 images from each of 10,000 species for both training and testing, forming a 1:1 train/test split.

For COCO, we randomly sample 250 images from each of the 80 predefined categories for both training and testing.

\textbf{Training and implementation.} For all methods, we fix the number of Level-1 and Level-2 neurons to
$
n_1=20{,}000,
\qquad
n_2=10{,}000,
$
respectively, to ensure a fair comparison.

All models are optimized using Adam, and all experiments are conducted on 4 NVIDIA RTX PRO 6000 GPUs.

For baseline methods that do not natively support learning multi-level concepts, such as BatchTopK, TopK, ReLU, P-Annealing, Gated, and JumpReLU SAEs, we train two separate SAEs: one with \(n=n_1\) and one with \(n=n_2\).

The resulting Level-1 and Level-2 units are connected post hoc using the parent-assignment rule in~\appref{appsec:hms}.

For Matryoshka SAE, we use its native nested dictionary structure as the hierarchical representation.

For the stacked SAE baseline, we use the naive stacked formulation discussed in~\secref{sec:theory}; its empirical performance is reported in~\secref{sec:results}.

\subsection{Dynamic Masking of Dead Latents}

\label{appsec:dynamic_masking}

To reduce unnecessary computation in the Level-2 SAE, we use a dynamic masking strategy for inactive Level-1 atoms.

At each training step \(t\), let \(\mathcal{B}\) denote the current mini-batch.

We define the active atom set

\begin{align}
\mathcal{A}_t =
\left\{
\mathbf{w}_j \;\middle|\;
\sum_{\mathbf{x}\in\mathcal{B}}
\mathbf{1}\!\left(\left|[g_1(\mathbf{x})]_j\right|>\epsilon_0\right)
\ge 1
\right\},
\label{eq:dynamic_mask_active_set}
\end{align}

where \(\epsilon_0>0\) is a small threshold and \([g_1(\mathbf{x})]_j\) denotes the \(j\)-th entry of the Level-1 sparse code.

Instead of applying the Level-2 reconstruction loss to all \(n_1\) decoder atoms at every step, we apply it only to active atoms in \(\mathcal{A}_t\):

\begin{align}
\mathcal{L}_{2,t}
=
\frac{1}{|\mathcal{A}_t|}
\sum_{\mathbf{w}_j\in\mathcal{A}_t}
\Big(
\|\mathbf{w}_j-h_2(g_2(\mathbf{w}_j))\|_2^2
+
\lambda_2\mathcal{S}(g_2(\mathbf{w}_j))
\Big).
\label{eq:dynamic_mask_l2_loss}
\end{align}

If \(\mathcal{A}_t=\emptyset\), we skip the Level-2 update for that mini-batch.
This masking is an implementation strategy rather than a change to the model architecture.
It reduces computation by avoiding repeated Level-2 updates on atoms that are inactive for the current batch.
Across training, atoms are included in the Level-2 objective whenever they become active in sampled mini-batches.

\subsection{Stop-Gradient Ablations for the Level-2 Objective}\label{appsec:stopgrad}

In our full \modelname formulation, the Level-2 objective in~\eqnref{eq:L2} is optimized without any stop-gradient operation, allowing gradients from $\mathcal{L}_2$ to flow through the Level-1 decoder atoms $\{\mathbf{w}_j\}_{j=1}^{n_1}$. To better understand the role of this gradient coupling, we conduct an ablation study with two stop-gradient variants.

\textbf{Partial Stop-Gradient.}
In this variant, we apply a stop-gradient operator to $\mathbf{w}_j$ only in the reconstruction target of the first term in~\eqnref{eq:L2}. Concretely, the Level-2 loss is modified as
\begin{align}
\mathcal{L}_2^{\text{partial}}
= \frac{1}{n_1} \sum_{j=1}^{n_1} \Big(
\|\text{sg}[\mathbf{w}_j] - h_2(g_2(\mathbf{w}_j))\|_2^2
+ \lambda_2 \mathcal{S}(g_2(\mathbf{w}_j))
\Big),
\end{align}
where $\text{sg}[\cdot]$ denotes the stop-gradient operator. This setting prevents the reconstruction error from directly updating the Level-1 decoder atoms, while still allowing gradients from the sparsity term to propagate through $\mathbf{w}_j$.

\textbf{Bilateral Stop-Gradient.}
In this variant, we apply the stop-gradient operator to $\mathbf{w}_j$ for both terms in the Level-2 objective:
\begin{align}
\mathcal{L}_2^{\text{full}}
= \frac{1}{n_1} \sum_{j=1}^{n_1} \Big(
\|\text{sg}[\mathbf{w}_j] - h_2(g_2(\text{sg}[\mathbf{w}_j]))\|_2^2
+ \lambda_2 \mathcal{S}(g_2(\text{sg}[\mathbf{w}_j]))
\Big).
\end{align}
Under this formulation, the Level-2 SAE is trained entirely on a fixed set of Level-1 decoder atoms and does not influence the lower-level SAE during optimization.

\textbf{No Stop-Gradient (Full \modelname).}
By contrast, our full method does not employ any stop-gradient operation in~\eqnref{eq:L2}. This allows the Level-2 reconstruction and sparsity objectives to jointly shape the Level-1 decoder atoms, encouraging them to organize into structures that admit compact, semantically coherent higher-level abstractions. Empirically, this formulation yields the strongest performance, as reported in our ablation results.

\subsection{Baselines}

\label{appsec:baselines}

We compare \modelname{} with a diverse set of SAE baselines that represent common dictionary-learning and mechanistic-interpretability architectures:
\begin{itemize}[nosep,leftmargin=18pt]
    \item \textbf{TopK SAE}~\cite{gao2024scalingevaluatingsparseautoencoders} enforces sparsity by retaining only the top-\(K\) activations per input.
    \item \textbf{BatchTopK}~\cite{bussmann2024batchtopk} applies Top-\(K\) sparsity at the batch level, encouraging more balanced feature usage across mini-batches.
    \item \textbf{ReLU SAE}~\cite{bricken2023monosemanticity} uses a standard ReLU nonlinearity with an \(\ell_1\)-style sparsity objective.
    \item \textbf{P-Annealing}~\cite{karvonen2024measuringprogressdictionarylearning} gradually increases sparsity during training for improved optimization stability.
    \item \textbf{Gated SAE}~\cite{rajamanoharan2024improvingdictionarylearninggated} introduces learnable gates to decouple feature selection from feature magnitude estimation.
    \item \textbf{JumpReLU}~\cite{rajamanoharan2024jumpingaheadimprovingreconstruction} uses a discontinuous activation function for sharper feature selection.
    \item \textbf{Matryoshka SAE}~\cite{bussmann2025learningmultilevelfeaturesmatryoshka, zaigrajew2025interpretingcliphierarchicalsparse} uses nested dictionaries to encourage multi-scale feature separation.
    \item \textbf{Stacked SAE} stacks a second SAE on the sparse code of the first SAE, corresponding to the naive architecture analyzed in~\secref{sec:theory}.
\end{itemize}

These baselines cover hard sparsity, soft sparsity, adaptive sparsity schedules, gated feature selection, discontinuous activations, nested-prefix hierarchies, and naive stacked hierarchies.

For flat SAE baselines, parent-child structure is constructed post hoc using the rule in~\appref{appsec:hms}, so that all methods can be evaluated with the same HMS metric.

\subsection{Evaluation Metrics}
\label{appsec:hms}

\textbf{Hierarchical Mono-Semanticity (HMS).}
Existing monosemanticity evaluations mainly ask whether an individual SAE feature corresponds to a coherent semantic concept~\cite{pach2025sparseautoencoderslearnmonosemantic}.
In contrast, our goal is to evaluate \emph{hierarchical} concept discovery:
given a high-level Level-2 concept, do its assigned Level-1 children represent semantically related low-level concepts?
We therefore define Hierarchical Mono-Semanticity (HMS), which measures the semantic coherence among the Level-1 concepts grouped under the same Level-2 parent.

\paragraph{Level-1 semantic representations.}
For each Level-1 SAE neuron \(j\), we construct a semantic representation
\(\mathbf{R}_j\in\mathbb{R}^{d_e}\) from a fixed vision-backbone embedding space.
Let \(\{\mathbf{E}_i\}_{i=1}^{N}\) denote image embeddings extracted from a fixed vision backbone, where \(\mathbf{E}_i\in\mathbb{R}^{d_e}\) is the embedding of image \(i\).
Let \(v_{i,j}\) be the activation value of Level-1 neuron \(j\) on image \(i\).
We first min-max normalize the activations of neuron \(j\) across the dataset:
\begin{align}
    a_{i,j}
    =
    \frac{v_{i,j}-\min(\mathbf{v}_j)}
    {\max(\mathbf{v}_j)-\min(\mathbf{v}_j)+\epsilon},
    \qquad
    \mathbf{v}_j=\{v_{i,j}\}_{i=1}^{N},
\end{align}
where \(\epsilon>0\) is a small constant for numerical stability.
The semantic representation of neuron \(j\) is then the activation-weighted average of image embeddings:
\begin{align}
    \mathbf{R}_j
    =
    \frac{\sum_{i=1}^{N}a_{i,j}\mathbf{E}_i}
    {\sum_{i=1}^{N}a_{i,j}+\epsilon}.
\end{align}
Intuitively, \(\mathbf{R}_j\) summarizes the visual semantics of the images on which neuron \(j\) is most active.
Thus, two Level-1 neurons with similar top-activating image semantics should have nearby \(\mathbf{R}_j\)'s in the vision-backbone embedding space.

\paragraph{Parent assignment.}
Let \(\pi(j)\in\{1,\dots,n_2\}\) denote the parent assignment that maps each Level-1 neuron \(j\) to a Level-2 concept.
For \modelname{}, this assignment is induced by the Level-2 SAE.
Each Level-1 neuron \(j\) corresponds to a decoder atom
\[
\mathbf{w}_j=\mathbf{W}^{(1)}_{\mathrm{dec},j}\in\mathbb{R}^{d},
\]
which is encoded by the Level-2 encoder \(g_2\).
We assign \(j\) to the most active Level-2 latent:
\begin{align}
    \pi(j)
    =
    \argmax_{\ell\in\{1,\dots,n_2\}}
    [g_2(\mathbf{w}_j)]_\ell .
\end{align}

For baseline SAE variants that do not natively define a parent--child hierarchy, we construct \(\pi(\cdot)\) post hoc.
We train or obtain Level-1 and Level-2 units according to the corresponding baseline design, and assign each Level-1 unit to the Level-2 unit with which it has the highest co-activation frequency over the evaluation dataset.
This yields a unified parent assignment for all methods, allowing HMS to be computed with the same formula.

\paragraph{HMS definition.}
For each Level-2 concept \(k\), define its Level-1 children as
\begin{align}
    \mathcal{C}_k=\{j:\pi(j)=k\},
\end{align}
and let \(m=|\mathcal{C}_k|\).
When \(m<2\), we exclude \(k\) from aggregate HMS statistics because pairwise coherence is undefined for singleton or empty child sets.
For \(m\ge2\), the HMS score of concept \(k\) is the mean pairwise cosine similarity among the semantic representations of its Level-1 children:
\begin{align}
    \mathrm{HMS}(k)
    =
    \frac{2}{m(m-1)}
    \sum_{\substack{u,v\in\mathcal{C}_k\\u<v}}
    \frac{\mathbf{R}_{u}^{\top}\mathbf{R}_{v}}
    {\|\mathbf{R}_{u}\|_2\|\mathbf{R}_{v}\|_2}.
\end{align}
The normalization \(2/(m(m-1))\) averages over all unordered child pairs.
A high HMS score means that the children assigned to the same Level-2 parent are mutually close in semantic embedding space, indicating a coherent high-level grouping.
A low HMS score indicates that the parent mixes semantically unrelated or weakly related Level-1 concepts.

\paragraph{Aggregate statistics.}
Let \(\mathcal{K}_{\mathrm{active}}\) be the set of active Level-2 concepts with at least two children.
Our main paper reports \(\mathrm{HMS}_{\mathrm{mean}}\), which summarizes the overall semantic coherence of all active Level-2 concepts:
\begin{align}
    \mathrm{HMS}_{\mathrm{mean}}
    =
    \frac{1}{|\mathcal{K}_{\mathrm{active}}|}
    \sum_{k\in\mathcal{K}_{\mathrm{active}}}\mathrm{HMS}(k).
\end{align}
For completeness, the full quantitative tables in~\appref{appsec:full_hms} also report:
\begin{align}
    \mathrm{HMS}_{\min}
    &=
    \min_{k\in\mathcal{K}_{\mathrm{active}}}\mathrm{HMS}(k),\\
    \mathrm{HMS}_{\mathrm{med}}
    &=
    \operatorname{median}_{k\in\mathcal{K}_{\mathrm{active}}}\mathrm{HMS}(k),\\
    \mathrm{HMS}_{\max}
    &=
    \max_{k\in\mathcal{K}_{\mathrm{active}}}\mathrm{HMS}(k).
\end{align}
The mean is used as the primary summary metric because it reflects overall hierarchy quality across all active Level-2 concepts.
The median provides a robustness check for the typical parent concept, while the minimum and maximum characterize worst-case and best-case parent concepts.

\subsection{Steering Details}
\label{app:steering}    
\paragraph{Setup.} The MLLM we use is Qwen3-VL-4B-Instruct, with both SAEs operating on the residual stream of \texttt{visual.blocks.23}. CSAE is the two-level cascade with the number of Level-1 and Level-2 neurons to $n_1 = 20{,}000$ and $n_2 = 10{,}000$, respectively. MSAE is a global-batch-top-$k$ Matryoshka SAE with $d\!=\!30\mathrm{k}$, group sizes $[20\mathrm{k},10\mathrm{k}]$, $k\!=\!20$. Both SAEs are trained on the same $20{,}000$ COCO val2014 images.

\paragraph{Concept clusters and MSAE matching.} We use random $10$ L2 CSAE clusters. The L1 neurons with a concept word list were obtained by prompting the MLLM on the top-$10$ training images. MSAE has no native cluster structure, so for each CSAE cluster we discover a matched MSAE set $\mathcal{A}_{\mathrm{MSAE}}$ by selecting the $|\mathcal{A}|$ MSAE atoms with lowest average activation rank on the \emph{same} training images.

\paragraph{Steering hook.} A forward hook on \texttt{visual.blocks.23} encodes the residual activation through the SAE, overrides each neuron in $\mathcal{A}$ to a clamp value $c$, and writes back the SAE reconstruction. The clamp magnitude is $c\!=\!\pm 3\textbf{u}_s$, where $\textbf{u}_s$ is the mean post-ReLU activation of the cluster's atoms on the cluster's top-$10$ training images, computed independently for each SAE. The hook is identical for the two SAEs.

\paragraph{Methods.} For every (cluster, image) pair we generate four captions: \textbf{baseline} (no clamp), \textbf{rand1} (one alive atom clamped, deterministic seed), \textbf{best\_child} (the highest-scoring atom in $\mathcal{A}$), and \textbf{cluster} (every atom in $\mathcal{A}$ clamped jointly). Insertion uses $+3\textbf{u}_s$, suppression $-3\textbf{u}_s$.

\paragraph{Image sets.} Insertion uses $100$ random held-out images per cluster, identical between the two SAEs ($1{,}000$ rows each). Suppression filters to images where the unsteered baseline already mentions the concept (a $-3\sigma$ clamp can only \emph{remove} concepts that are present); we collect $1000$ qualifying CSAE rows and evaluate MSAE on the exact same $1000$ pairs.

\paragraph{Caption generation.} For every (cluster, image, arm) we run one greedy generation prompted with \emph{``Describe what is in this image in one short sentence.''} (\texttt{do\_sample=False}, \texttt{max\_new\_tokens=30}, no system prompt). The same prompt is used in every condition; only the activation hook changes.

\paragraph{LLM judge.} For each row, the four arm captions are anonymized to labels A/B/C/D under a row-wise random permutation. Gemini 2.5 Flash is shown the test image and the four labelled captions and answers, for each label, the binary question \emph{``does the caption semantically express the concept?''} with a one-sentence reason.

\begin{table}[!t]
\centering
\caption{
Ablation studies on ImageNet with \texttt{Qwen3-VL-4B-Instruct}.
}
\label{apptab:ablation}
\vskip -0.1cm
\setlength{\tabcolsep}{7pt}
\begin{tabular}{lcccccc}
\toprule
\textbf{Metric}
& \textbf{Partial SG}
& \textbf{Bilateral SG}
& \textbf{Stacked}
& \textbf{HAC}
& \textbf{Spectral}
& \textbf{CSAE (Full)} \\
\midrule
\textbf{HMS}$_{\min}$
& 0.0263 & 0.0101 & \textbf{0.2765} & 0.0000 & 0.0000 & \underline{0.0380} \\
\textbf{HMS}$_{\mathrm{med}}$
& 0.8280 & 0.7012 & 0.3367 & \underline{0.8587} & 0.6906 & \textbf{0.9999} \\
\textbf{HMS}$_{\max}$
& \textbf{0.9999} & \textbf{0.9999} & 0.3670 & 0.9893 & \underline{0.9907} & \textbf{0.9999} \\
\textbf{HMS}$_{\mathrm{mean}}$
& \underline{0.7099} & 0.6333 & 0.3322 & 0.7025 & 0.6490 & \textbf{0.7776} \\
\bottomrule
\end{tabular}
\renewcommand{\arraystretch}{1.0}
\vskip -0.25cm
\end{table}

\section{Full Quantitative Results}\label{appsec:more_quantitatively}
\label{appsec:full_hms}

\textbf{Full Results on HMS Scores.} \tabref{tab:results_qwen}$\sim$\ref{tab:results_llava} show the full results for all HMS scores across different MLLMs and different datasets. 

\textbf{Ablation Studies.}
\tabref{apptab:ablation} evaluates which components are responsible for the gains of \modelnames{}: (1) Restricting gradient flow between the two SAE levels hurts performance: both partial and bilateral stop-gradient variants (\textbf{Partial SG} and \textbf{Bilateral SG}, details in~\appref{appsec:stopgrad}) lead to lower {HMS} scores compared with full joint training (\textbf{CSAE (Full))}, indicating that the Level-2 objective should shape the Level-1 atoms rather than merely cluster a fixed dictionary. (2) The Stacked SAE (\textbf{Stacked}) performs substantially worse, supporting our theoretical argument that naively re-compressing sparse activation codes is poorly suited for multi-level concept learning. {(3) Replacing the Level-2 SAE with post-hoc clustering on Level-1 decoder weight columns $\w_j$, including Hierarchical Agglomerative Clustering (\textbf{HAC}) and Spectral Clustering (\textbf{SC}), also underperforms our full CSAE.
This suggests that the learned Level-2 sparse abstraction is more effective than post-hoc clustering.} 


\begin{table}[H]
\centering
\caption{HMS scores for on \texttt{Qwen3-VL-4B-Instruct}. We mark the best results in \textbf{bold} and the second best with \underline{underline}.}
\label{tab:results_qwen}
\resizebox{\linewidth}{!}{%
\begin{tabular}{llccccccccc}
\toprule
\textbf{Dataset} & \textbf{Metric} 
& \textbf{BatchTopK} 
& \textbf{TopK} 
& \textbf{ReLU} 
& \textbf{P-Annealing} 
& \textbf{Gated} 
& \textbf{JumpReLU} 
& \textbf{Matryoshka} 
& \textbf{Stacked} 
& \textbf{\modelname{} (Ours)} \\
\midrule

\multirow{4}{*}{Color}
& \textbf{HMS}$_{\min}$ 
& 0.1893 & 0.1015 & 0.2048 & 0.3830 & 0.2947 & 0.1030 & \textbf{0.9710} & 0.4585 & \underline{0.5797} \\
& \textbf{HMS}$_{\mathrm{med}}$ 
& 0.4202 & 0.1076 & 0.2125 & 0.5451 & 0.5200 & 0.1267 & \underline{0.9990} & 0.6279 & \textbf{0.9999} \\
& \textbf{HMS}$_{\max}$ 
& 0.8704 & 0.1581 & 0.2250 & 0.7073 & 0.9900 & 0.3344 & \underline{0.9995} & 0.9964 & \textbf{0.9999} \\
& \textbf{HMS}$_{\mathrm{mean}}$ 
& 0.4207 & 0.1224 & 0.2141 & 0.5450 & 0.6547 & 0.1465 & \underline{0.9775} & 0.6012 & \textbf{0.9825} \\

\midrule

\multirow{4}{*}{ImageNet}
& \textbf{HMS}$_{\min}$ 
& \textbf{0.1341} & \underline{0.1318} & 0.0572 & 0.0693 & 0.0311 & 0.0597 & 0.1104 & 0.2765 & 0.0380 \\
& \textbf{HMS}$_{\mathrm{med}}$ 
& 0.1413 & 0.1387 & 0.6041 & \underline{0.7984} & 0.5612 & 0.0680 & 0.6146 & 0.3367 & \textbf{0.9999} \\
& \textbf{HMS}$_{\max}$ 
& 0.6092 & 0.2124 & \underline{0.9772} & 0.9334 & 0.9289 & 0.0990 & 0.9979 & 0.3670 & \textbf{0.9999} \\
& \textbf{HMS}$_{\mathrm{mean}}$ 
& 0.2564 & 0.1554 & 0.5624 & \underline{0.6393} & 0.5613 & 0.0756 & 0.5843 & 0.3322 & \textbf{0.7776} \\

\midrule

\multirow{4}{*}{COCO}
& \textbf{HMS}$_{\min}$ 
& 0.1116 & 0.1158 & 0.1118 & \underline{0.1204} & 0.0867 & 0.0894 & 0.0334 & 0.1156 & \textbf{0.1900} \\
& \textbf{HMS}$_{\mathrm{med}}$ 
& 0.1220 & 0.1436 & 0.1255 & 0.1257 & 0.1923 & 0.1171 & \underline{0.4598} & 0.1468 & \textbf{0.9999} \\
& \textbf{HMS}$_{\max}$ 
& 0.1650 & 0.1864 & 0.9471 & 0.1310 & 0.8634 & 0.1554 & \underline{0.9978} & 0.1731 & \textbf{0.9999} \\
& \textbf{HMS}$_{\mathrm{mean}}$ 
& 0.1270 & 0.1514 & 0.3275 & 0.1257 & 0.2809 & 0.1206 & \underline{0.4614} & 0.1443 & \textbf{0.9795} \\

\midrule

\multirow{4}{*}{iNaturalist}
& \textbf{HMS}$_{\min}$ 
& 0.0851 & 0.0838 & \textbf{0.1225} & 0.1071 & 0.1041 & 0.0867 & 0.0892 & \underline{0.1099} & 0.0323 \\
& \textbf{HMS}$_{\mathrm{med}}$ 
& 0.2436 & 0.2648 & 0.2001 & 0.1400 & 0.1928 & 0.1003 & \underline{0.5983} & 0.2254 & \textbf{0.9738} \\
& \textbf{HMS}$_{\max}$ 
& 0.8308 & 0.9257 & 0.2573 & \textbf{0.9999} & 0.8418 & 0.1139 & 0.9931 & 0.2850 & \textbf{0.9999} \\
& \textbf{HMS}$_{\mathrm{mean}}$ 
& 0.3182 & 0.3254 & 0.1934 & 0.4157 & 0.2611 & 0.1016 & \underline{0.5879} & 0.2307 & \textbf{0.7412} \\

\bottomrule
\end{tabular}%
}
\vskip -0.4cm
\end{table}

\begin{table}[H]
\centering
\caption{HMS scores for on \texttt{Gemma-3-4B-IT}. We mark the best results in \textbf{bold} and the second best with \underline{underline}.}
\label{tab:results_gemma}
\resizebox{\linewidth}{!}{%
\begin{tabular}{llccccccccc}
\toprule
\textbf{Dataset} & \textbf{Metric} 
& \textbf{BatchTopK} 
& \textbf{TopK} 
& \textbf{ReLU} 
& \textbf{P-Annealing} 
& \textbf{Gated} 
& \textbf{JumpReLU} 
& \textbf{Matryoshka} 
& \textbf{Stacked} 
& \textbf{\modelname{} (Ours)} \\
\midrule

\multirow{4}{*}{Color}
& \textbf{HMS}$_{\min}$ 
& 0.4740 & 0.1876 & 0.2001 & 0.5925 & 0.1106 & 0.1264 & \underline{0.9750} & 0.5901 & \textbf{0.9814} \\
& \textbf{HMS}$_{\mathrm{med}}$ 
& 0.6412 & 0.3885 & 0.2237 & 0.6584 & 0.6123 & 0.1301 & \underline{0.9845} & 0.6612 &\textbf{0.9936} \\
& \textbf{HMS}$_{\max}$ 
& 0.8850 & 0.9991 & 0.2425 & 0.7243 & 0.8146 & 0.2523 & \underline{0.9995} & 0.9911 &\textbf{0.9997} \\
& \textbf{HMS}$_{\mathrm{mean}}$ 
& 0.6667 & 0.4452 & 0.2221 & 0.6602 & 0.5662 & 0.1597 & \underline{0.9849} & 0.7097& \textbf{0.9929} \\

\midrule

\multirow{4}{*}{ImageNet}
& \textbf{HMS}$_{\min}$ 
& 0.1448 & 0.1294 & 0.0792 & 0.0125 & -0.0025 & 0.0238 & 0.2010 & \underline{0.2975} & \textbf{0.4155} \\
& \textbf{HMS}$_{\mathrm{med}}$ 
& 0.1589 & 0.1351 & 0.0798 & 0.0899 & 0.0883 & 0.0867 & \underline{0.7532} & 0.4991 &\textbf{0.8375} \\
& \textbf{HMS}$_{\max}$ 
& 0.1729 & 0.1407 & 0.0805 & 0.2312 & 0.7234 & 0.9608 & \underline{0.9926} & 0.7007 &\textbf{0.9998} \\
& \textbf{HMS}$_{\mathrm{mean}}$ 
& 0.1605 & 0.1338 & 0.0797 & 0.1501 & 0.2564 & 0.1571 & \underline{0.6874} & 0.4879 &\textbf{0.7750} \\

\midrule

\multirow{4}{*}{COCO}
& \textbf{HMS}$_{\min}$ 
& 0.1930 & 0.0527 & \textbf{0.4659} & 0.1384 & 0.1956 & 0.1312 & 0.2187 & \underline{0.2205} &0.0293 \\
& \textbf{HMS}$_{\mathrm{med}}$ 
& 0.2802 & 0.0530 & 0.4659 & 0.1596 & \underline{0.8284} & 0.1736 & 0.7250 & 0.5083 & \textbf{0.9999} \\
& \textbf{HMS}$_{\max}$ 
& 0.3675 & 0.0533 & 0.4659 & 0.1809 & 0.9736 & 0.1952 & \textbf{0.9999} & 0.6769 & \textbf{0.9999} \\
& \textbf{HMS}$_{\mathrm{mean}}$ 
& 0.2802 & 0.0530 & 0.4659 & 0.1596 & 0.6659 & 0.1667 & \underline{0.7020} & 0.4124 & \textbf{0.8907} \\

\midrule

\multirow{4}{*}{iNaturalist}
& \textbf{HMS}$_{\min}$ 
& 0.2109 & 0.1572 & 0.0713 & 0.1333 & -0.0020 & 0.1010 & \textbf{0.7186} & 0.2472 & \underline{0.4012} \\
& \textbf{HMS}$_{\mathrm{med}}$ 
& 0.5028 & 0.2299 & 0.0892 & 0.1481 & 0.0959 & 0.1435 & \underline{0.8863} & 0.4557 & \textbf{0.9607} \\
& \textbf{HMS}$_{\max}$ 
& 0.7947 & 0.3027 & 0.1072 & 0.1629 & \textbf{0.9999} & \textbf{0.9999} & 0.9879 & 0.6642 & 0.9763 \\
& \textbf{HMS}$_{\mathrm{mean}}$ 
& 0.5028 & 0.2299 & 0.0892 & 0.1481 & 0.2254 & 0.4149 & \underline{0.8687} & 0.4601 & \textbf{0.9128} \\

\bottomrule
\end{tabular}%
}
\vskip -0.3cm
\end{table}

\begin{table}[H]
\centering
\caption{HMS scores for on \texttt{LLaVA-1.5-13B}. We mark the best results in \textbf{bold} and the second best with \underline{underline}.}
\label{tab:results_llava}
\resizebox{\linewidth}{!}{%
\begin{tabular}{llccccccccc}
\toprule
\textbf{Dataset} & \textbf{Metric} 
& \textbf{BatchTopK} 
& \textbf{TopK} 
& \textbf{ReLU} 
& \textbf{P-Annealing} 
& \textbf{Gated} 
& \textbf{JumpReLU} 
& \textbf{Matryoshka} 
& \textbf{Stacked} 
& \textbf{\modelname{} (Ours)} \\
\midrule

\multirow{4}{*}{Color}
& \textbf{HMS}$_{\min}$ 
& 0.2481 & 0.1104 & 0.0587 & 0.1968 & 0.0979 & 0.2013 & \textbf{0.9273} & 0.2536 & \underline{0.5672} \\
& \textbf{HMS}$_{\mathrm{med}}$ 
& 0.3220 & 0.2396 & 0.2475 & 0.3250 & 0.3525 & 0.3088 & \underline{0.9377} & 0.6788 & \textbf{0.9481} \\
& \textbf{HMS}$_{\max}$ 
& 0.4461 & 0.4154 & \textbf{0.9999} & 0.5128 & \textbf{0.9999} & \textbf{0.9999} & \textbf{0.9999}  & 0.9953 & \textbf{0.9999} \\
& \textbf{HMS}$_{\mathrm{mean}}$ 
& 0.3307 & 0.2421 & 0.3170 & 0.3357 & 0.4324 & 0.4337 & \underline{0.9618} & 0.5767 & \textbf{0.9705} \\

\midrule

\multirow{4}{*}{ImageNet}
& \textbf{HMS}$_{\min}$ 
& 0.1043 & 0.0796 & 0.0168 & 0.0113 & \underline{0.1414} & 0.0289 & \textbf{0.2260} & 0.1631 & 0.0942 \\
& \textbf{HMS}$_{\mathrm{med}}$ 
& 0.1751 & 0.2176 & 0.0814 & 0.4980 & 0.1976 & 0.2757 & \underline{0.7792} & 0.1990 & \textbf{0.9650} \\
& \textbf{HMS}$_{\max}$ 
& 0.6983 & 0.9541 & 0.5679 & 0.8122 & 0.2811 & 0.9870 & \underline{0.9949} & 0.6534 & \textbf{0.9999} \\
& \textbf{HMS}$_{\mathrm{mean}}$ 
& 0.1913 & 0.2458 & 0.1100 & 0.4934 & 0.1945 & 0.4099 & \underline{0.7471} & 0.2304 & \textbf{0.8958} \\

\midrule

\multirow{4}{*}{COCO}
& \textbf{HMS}$_{\min}$ 
& 0.0459 & 0.0855 & 0.0766 & 0.0991 & 0.0895 & \textbf{0.1173} & 0.0482 & \underline{0.1166} & 0.0624 \\
& \textbf{HMS}$_{\mathrm{med}}$ 
& 0.3296 & 0.3382 & 0.2535 & 0.2706 & 0.0895 & 0.2068 & \underline{0.8646} & 0.2031 & \textbf{0.9970} \\
& \textbf{HMS}$_{\max}$ 
& 0.9918 & 0.9954 & 0.9969 & 0.9758 & 0.0895 & 0.5017 & \underline{0.9998} & 0.7896 & \textbf{1.0000} \\
& \textbf{HMS}$_{\mathrm{mean}}$ 
& 0.3966 & 0.3963 & 0.3206 & 0.3339 & 0.0895 & 0.2368 & \underline{0.8107} & 0.2843 & \textbf{0.9405} \\

\midrule

\multirow{4}{*}{iNaturalist}
& \textbf{HMS}$_{\min}$ 
& 0.0902 & 0.1322 & 0.0688 & 0.1126 & 0.0880 & 0.0059 & 0.1117 & \underline{0.2093} & \textbf{0.3531} \\
& \textbf{HMS}$_{\mathrm{med}}$ 
& 0.2984 & 0.2865 & 0.3124 & 0.2419 & 0.2929 & 0.2122 & \underline{0.7855} & 0.2286 & \textbf{0.9871} \\
& \textbf{HMS}$_{\max}$ 
& 0.9420 & 0.8361 & 0.9883 & 0.6619 & 0.9958 & 0.9956 & \underline{0.9969} & 0.3690 & \textbf{0.9992} \\
& \textbf{HMS}$_{\mathrm{mean}}$ 
& 0.3330 & 0.3156 & 0.3644 & 0.2655 & 0.3633 & 0.2575 & \underline{0.7586} & 0.2598 & \textbf{0.7798} \\

\bottomrule
\end{tabular}%
}
\vskip -0.5cm
\end{table}

\section{More Qualitative Results}\label{appsec:more_qualitative}
In~\figref{fig:more_qualitative}, we provide more qualitative results on our \modelname's learned Level-2 and Level-1 concepts from MLLMs, compared with Matryoshka SAEs.
\begin{figure}[H]
  \centering
  \includegraphics[width=\linewidth]{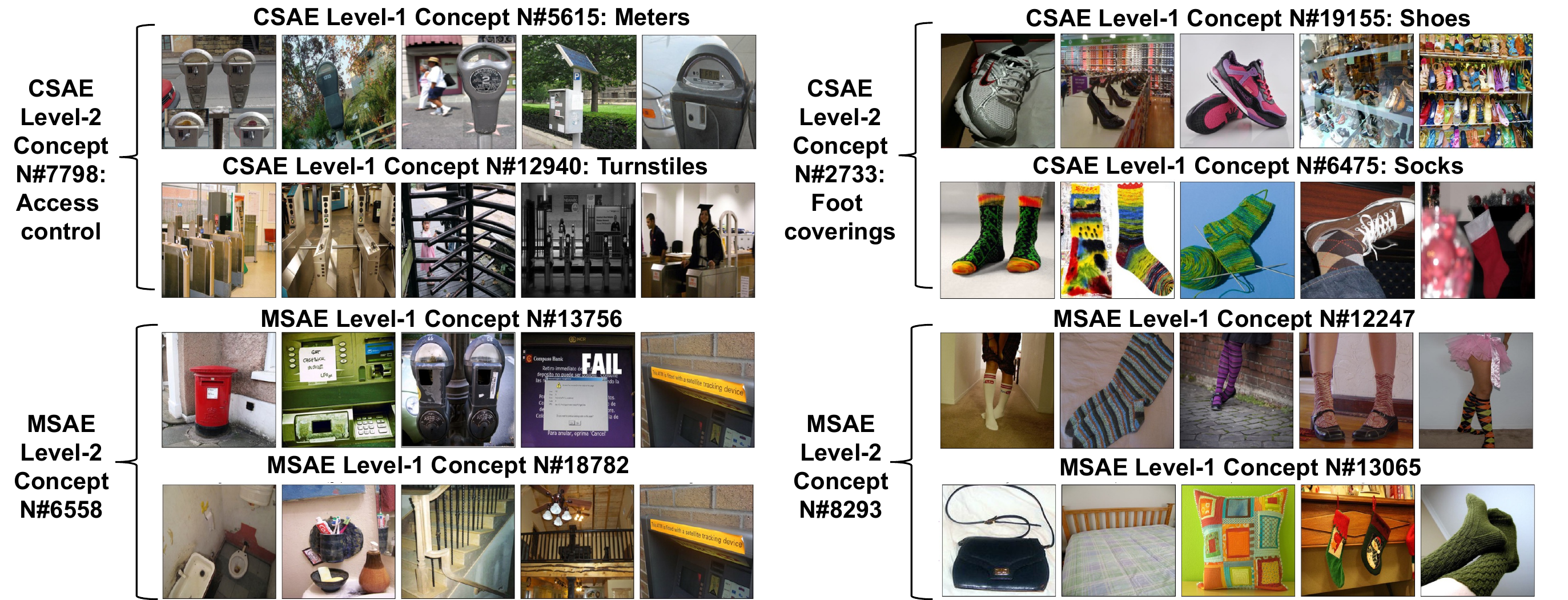}
   \caption{More qualitative results on our \modelname's learned Level-1 and Level-2 concepts from MLLMs, compared with Matryoshka SAEs.}\label{fig:more_qualitative_1}
\end{figure}

\begin{figure}[H]
  \centering
  \includegraphics[width=\linewidth]{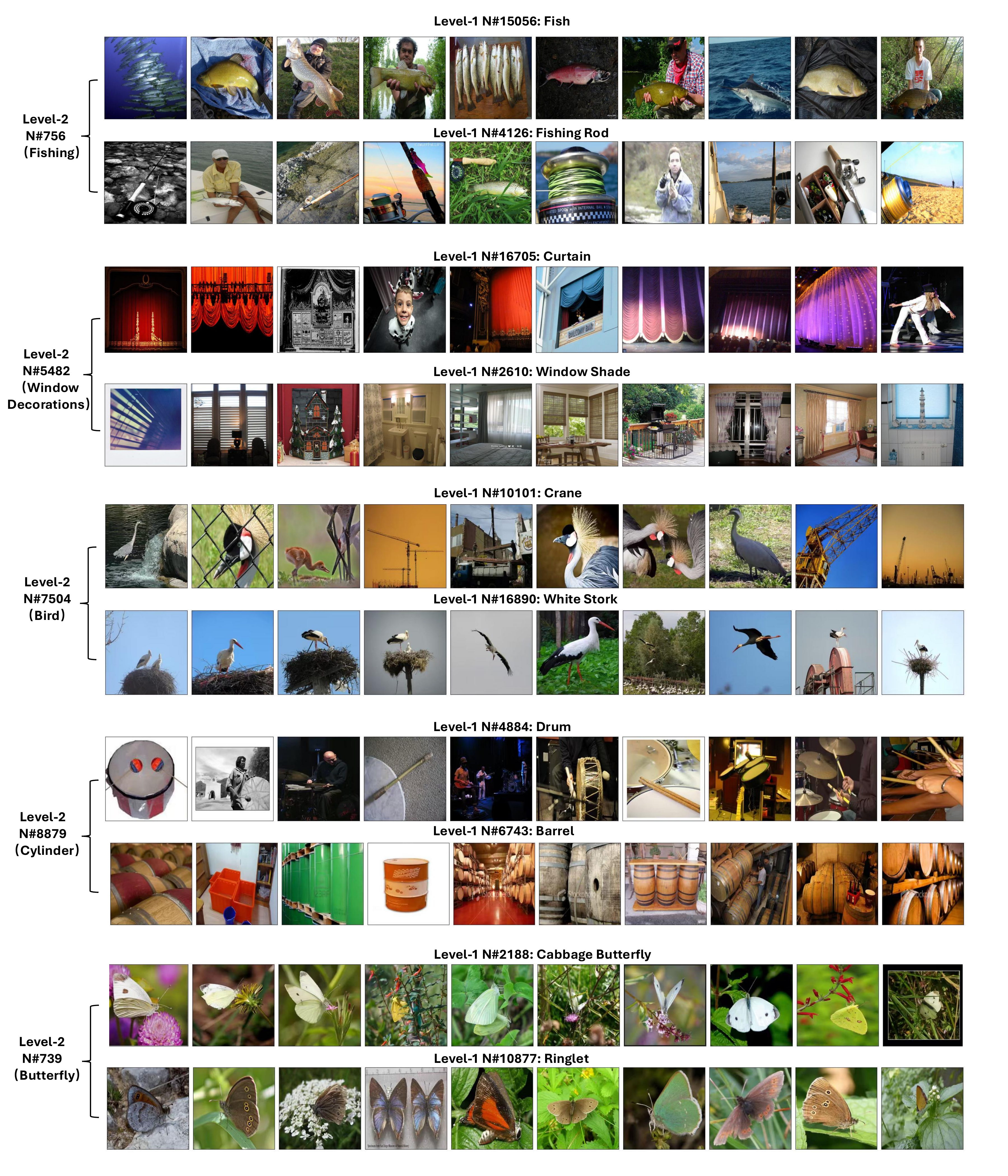}
   \caption{More qualitative results on our \modelname's learned Level-1 and Level-2 concepts from MLLMs.}\label{fig:more_qualitative}
\end{figure}

\clearpage

